\newtheorem{defn}{Definition}
\newtheorem{lem}{Lemma}
\newtheorem{thm}[lem]{Theorem}
\newtheorem{cor}[lem]{Corollary}
\newtheorem{prp}[lem]{Proposition}
\newtheorem*{ref@theorem}{\ref@title}
\newcommand{\newreftheorem}[2]{%
\newenvironment{ref#1}[1]{%
 \def\ref@title{#2 \ref{##1}}%
 \begin{ref@theorem}}%
 {\end{ref@theorem}}}
\icmltitlerunning{Bounding and Counting Linear Regions of Deep Neural Networks}
\definecolor{color1}{HTML}{E41A1C}
\definecolor{color2}{HTML}{377DB8}
\definecolor{color3}{HTML}{4DAF4A}
\definecolor{color4}{HTML}{984EA3}
\definecolor{color5}{HTML}{FF7F00}
\definecolor{color6}{HTML}{A65628}
\definecolor{color2b}{HTML}{98BFE0}
\begin{document}

\twocolumn[
\icmltitle{Bounding and Counting Linear Regions of Deep Neural Networks}



\icmlsetsymbol{equal}{*}

\begin{icmlauthorlist}
\icmlauthor{Thiago Serra}{equal,cmu}
\icmlauthor{Christian Tjandraatmadja}{equal,cmu}
\icmlauthor{Srikumar Ramalingam}{ut}
\end{icmlauthorlist}

\icmlaffiliation{cmu}{Carnegie Mellon University, Pittsburgh, USA}
\icmlaffiliation{ut}{The University of Utah, Salt Lake City, USA}

\icmlcorrespondingauthor{Thiago Serra}{tserraaz@alumni.cmu.edu}
\icmlcorrespondingauthor{Christian Tjandraatmadja}{ctjandra@alumni.cmu.edu}
\icmlcorrespondingauthor{Srikumar Ramalingam}{srikumar@cs.utah.edu}

\icmlkeywords{Machine Learning, ICML}

\vskip 0.3in
]



\printAffiliationsAndNotice{\icmlEqualContribution} 


\begin{abstract}
We investigate the complexity of deep neural networks ({\sc DNN}) that represent piecewise linear ({\sc PWL}) functions. 
In particular, we study the number of linear regions, i.e. pieces, that a {\sc PWL} function represented by a {\sc DNN} can attain, both theoretically and empirically.
We present (i) tighter upper and lower bounds for the maximum number of linear regions on rectifier networks, which are exact for inputs of dimension one; 
(ii) a first upper bound for multi-layer maxout networks; and 
(iii) a first method to perform exact enumeration or counting of the number of regions by modeling the DNN with a mixed-integer linear formulation. 
These bounds come from leveraging the dimension of the space defining each linear region.
The results also indicate that a deep rectifier network can only have more linear regions than every shallow counterpart with same number of neurons if that number exceeds the dimension of the input.
\end{abstract}
\section{Introduction}

We have seen an unprecedented success of {\sc DNN}s in computer vision, speech, and other domains~\citep{Krizhevsky2012,Ciresan2012,Goodfellow2013,Hinton2012}. While the popular networks such as AlexNet~\citep{Krizhevsky2012}, GoogleNet~\citep{Szegedy2015}, and residual networks~\citep{He2016DeepRL} have shown record beating performance on various image recognition tasks, empirical results still govern the design of network architecture in terms of depth and activation functions. Two important considerations that are part of most successful architectures are greater depth and the use of {\sc PWL} activation functions such as rectified linear units (ReLUs). This large gap between practice and theory has driven researchers toward mathematical modeling of the expressive power of {\sc DNN}s~\citep{Cybenko1989,Anthony1999,Pascanu2013,Montufar2014,Bianchini2014,Eldan2015,Telgarsky2015,Mhaskar2016,Raghu2017,Montufar2017}. 

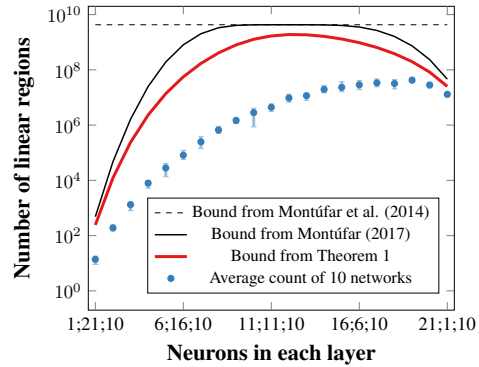
\begin{figure}[!t]
  \centering
\pgfplotsset{every error bar/.style={ultra thick, color=color2b}}
\begin{tikzpicture}[scale=0.71]
\begin{semilogyaxis}[filter discard warning=false, enlargelimits=false, enlarge x limits=0.02, 
title style={at={(0.5,-0.25)},anchor=north,yshift=-0.1},
xlabel={\large \textbf{Neurons in each layer}},
symbolic x coords={
1;21;10,
2;20;10,
3;19;10,
4;18;10,
5;17;10,
6;16;10,
7;15;10,
8;14;10,
9;13;10,
10;12;10,
11;11;10,
12;10;10,
13;9;10,
14;8;10,
15;7;10,
16;6;10,
17;5;10,
18;4;10,
19;3;10,
20;2;10,
21;1;10,
},
xtick=\empty,
extra x ticks={1;21;10,6;16;10,11;11;10,16;6;10,21;1;10,},
ylabel={\large \textbf{Number of linear regions}},ymin=2e-1,ymax=2e10, 
every axis y label/.style={at={(ticklabel cs:0.5)},rotate=90,anchor=near ticklabel}, 
legend style={at={(0.56,0.37)},anchor=north}
]
\pgfplotstableread{
x         y    ymin  ymax
1;21;10    13.8    9    18
2;20;10    190.5    147    243
3;19;10    1316.2    800    1639
4;18;10    7932.3    5188    9717
5;17;10    28250.3    13512    41228
6;16;10    81478.5    57220    121426
7;15;10    248055    141403    380276
8;14;10    665167.4    502841    897485
9;13;10    1474984.1    1128238    1804370
10;12;10    2839365.5    856141    4102088
11;11;10    4431332.8    3090088    5858622
12;10;10    9612998.4    6703494    13131100
13;9;10    11616563.3    7889964    14122481
14;8;10    19621470    14892942    27586611
15;7;10    23220422    16096760    37544427
16;6;10    28800096.9    19388029    41041660
17;5;10    33969273.1111111    24302711    46776349
18;4;10    32112914.2    20425922    44773930
19;3;10    42112718.5555556    35810115    54015210
20;2;10    28282865.5    22193955    35171688
21;1;10    13071698.4    10460528    14655435
}{\mytable}
\addplot[dashed] coordinates {
(1;21;10,4294967296)
(21;1;10,4294967296)
};
\addplot[thick] coordinates {
(1;21;10,484)
(2;20;10,47264)
(3;19;10,1633280)
(4;18;10,25000448)
(5;17;10,191951232)
(6;16;10,808272896)
(7;15;10,2030043136)
(8;14;10,3348183808)
(9;13;10,4092785664)
(10;12;10,4281335808)
(11;11;10,4294967296)
(12;10;10,4294967296)
(13;9;10,4290772992)
(14;8;10,4248829952)
(15;7;10,4060086272)
(16;6;10,3556769792)
(17;5;10,2675965952)
(18;4;10,1619001344)
(19;3;10,738197504)
(20;2;10,234881024)
(21;1;10,46137344)
};
\addplot[ultra thick,color=color1] coordinates {
(1;21;10,243)
(2;20;10,12279)
(3;19;10,236909)
(4;18;10,2316709)
(5;17;10,13756567)
(6;16;10,56128117)
(7;15;10,171071287)
(8;14;10,411552217)
(9;13;10,800917467)
(10;12;10,1283052848)
(11;11;10,1690286436)
(12;10;10,1902816995)
(13;9;10,1858910222)
(14;8;10,1636341897)
(15;7;10,1312054984)
(16;6;10,965299552)
(17;5;10,645713191)
(18;4;10,385283875)
(19;3;10,198153450)
(20;2;10,82836506)
(21;1;10,25165813)
};
\addplot+[only marks, mark=*,mark size=1.5,color=color2] 
  plot[thick, error bars/.cd, y dir=both, y explicit]
  table[x=x,y=y,y error plus expr=\thisrow{ymax}-\thisrow{y},y error minus expr=\thisrow{y}-\thisrow{ymin}] {\mytable};
\legend{\small Bound from Mont\'ufar et al. (2014),\small Bound from Mont\'ufar (2017),\small Bound from Theorem~1,\small Average count of 10 networks}
\end{semilogyaxis}
\end{tikzpicture}
\caption{\emph{This paper shows improved bounds on the "number of linear regions" (typically used to study the expressiveness of {\sc DNN}s) of {\sc PWL} functions modeled by {\sc DNN}s that use rectified linear activation functions, and a method for exact counting of the number of such regions in trained networks. We compare upper bounds from the first and latest results in the literature~\cite{Montufar2014,Montufar2017} with our main result (See Theorem~\ref{thm:upper_bound_improved}). Using the proposed exact counting algorithm, we show the  
actual 
number of linear regions in 10 rectifier networks for MNIST digit recognition task with each configuration of two hidden layers totaling 22 neurons, reporting average and min-max range.}}
\label{fig:bounds_counting}
\end{figure}

The expressiveness of {\sc DNN}s can be studied by transforming one network to another with different number of layers or activation functions. 
While any continuous function can be modeled using a single hidden layer of sigmoid activation functions~\citep{Cybenko1989}, shallow networks require exponentially more neurons to model functions that can be modeled using much smaller deeper networks~\citep{Delalleau2011}. 
There are a wide variety of activation functions, which come with different modeling capabilities, such as threshold ($f(z) = (z>0)$),  logistic ($f(z) = 1/(1+\exp(-e))$),  ReLU  ($f(z) = \max\{0,z\}$), and maxout ($f(z_1,z_2,\dots,z_k)=\max\{z_1,z_2,\dots,z_k\}$). It has been shown that sigmoid networks are more expressive than similar-sized threshold networks~\citep{Maass1994}, and ReLU networks are more expressive than similar-sized threshold networks~\citep{Pan2016}. 

The complexity or expressiveness of neural networks belonging to the family of {\sc PWL} functions can also be analyzed by looking at how the network can partition the input space to an exponential number of linear response regions~\citep{Pascanu2013,Montufar2014}. The basic idea of a {\sc PWL} function is simple: we can divide the input space into several regions and we have individual linear functions for each of these regions. Functions partitioning the input space to a larger number of linear regions are considered to be more complex, or in other words, possess better representational power. In the case of ReLUs, it has been shown that some deep networks separate their input space into exponentially more linear response regions than their shallow counterparts despite using the same number of activation functions~\citep{Pascanu2013}. The results were later extended and improved. In particular,~\citet{Montufar2014} show upper and lower bounds on the maximal number of linear regions for a ReLU {\sc DNN} and a single layer maxout network, and a lower bound for a maxout {\sc DNN}. Furthermore, \citet{Raghu2017} and \citet{Montufar2017} improve the upper bound for a ReLU {\sc DNN}. This upper bound asymptotically matches the lower bound from~\citet{Montufar2014} when the number of layers and input dimension are constant and all layers have the same width. Finally, \citet{Arora2018} improve the lower bound by providing a family of ReLU {\sc DNN}s with an exponential number of regions for fixed size and depth. 

\noindent
{\bf Main Contributions}

This paper directly improves on the results of Mont\'{u}far et al.~\citep{Pascanu2013,Montufar2014,Montufar2017}, \citet{Raghu2017}, and \citet{Arora2018}. Fig.~\ref{fig:bounds_counting} highlights the main contributions, and the following list summarizes all the contributions:
\begin{itemize}
\item We achieve tighter upper and lower bounds on the maximal number of linear regions of the {\sc PWL} function corresponding to a {\sc DNN} that employs ReLUs as shown in Fig.~\ref{fig:bounds_counting}. As a special case, we present the exact maximal number of regions when the input dimension is one. We additionally provide the first upper bound 
for multi-layer maxout networks. (See Sections~\ref{sec:bound_rectifier} and ~\ref{sec:bound_maxouts}).\newline
\item We show for ReLUs that the exact maximal number of linear regions of shallow networks is larger than that of deep networks if the input dimension exceeds the number of neurons, a result that could not be inferred from bounds in prior work. (See Figure~\ref{fig:insightsB}). \newline
\item We use a mixed-integer linear formulation to show that counting linear regions is indeed possible. For the first time, we show the exact number of linear regions for a sample of 
{\sc DNN}s as shown in Fig.~\ref{fig:bounds_counting}.  
This new capability can be used to evaluate the tightness of the bounds and potentially to analyze the correlation between accuracy and the number of linear regions. 
(See Sections~\ref{sec:counting} and ~\ref{sec:exp}). 
\end{itemize}
\section{Notations and Background}\label{sec:background}

Let us assume that a feedforward neural network, which is studied in this paper, has $n_0$ input variables given by $\mathbf{x} = \{x_1,x_2,\dots,x_{n_0}\}$, and $m$ output variables given by $\mathbf{y} = \{y_1,y_2,\dots,y_m\}$. Each hidden layer $l=\{1,2,\dots,L\}$ has $n_l$ hidden neurons whose activations are given by $\mathbf{h}^l = \{h_1^l,h_2^l,\dots,h_{n_l}^l\}$. Let $W^l$ be the $n_l \times n_{l-1}$ matrix where each row corresponds to the weights of a neuron of layer $l$. Let $\mathbf{b}^l$ be the bias vector used to obtain the activation functions of neurons in layer $l$. 
Based on the $\operatorname{ReLU}(x) = \max\{0,x\}$ activation function, the activations of the hidden neurons and the outputs are given below:
\begin{eqnarray*}
\mathbf{h}^1 & = &  \max\{0,W^1 \mathbf{x} + b^1\} \\ 
\mathbf{h}^l & = &  \max\{0,W^l \mathbf{h}^{l-1} + b^l\} \\
\mathbf{y}   & = &  W^{L+1} \mathbf{h^L}
\end{eqnarray*}
As considered in~\citet{Pascanu2013}, the output layer is a linear layer that computes the linear combination of the activations from the previous layer without any ReLUs.

We can treat the {\sc DNN} as a piecewise linear ({\sc PWL}) function $F:\mathbb{R}^{n_0}\rightarrow \mathbb{R}^{m}$ that maps the input $\mathbf{x}$ in $\mathbb{R}^{n_0}$ to $\mathbf{y}$ in $\mathbb{R}^m$. 
We define linear regions based on activation patterns.

\noindent
{\bf Activation Pattern:} Let us consider an input vector $\mathbf{x} = \{x_1,\ldots,x_{n_0}\}$. For every layer $l$ we define an activation set $S^l \subseteq \{1,\ldots, n_l\}$ such that $e \in S^l$ if and only if the ReLU $e$ is active, i.e. $h^l_e > 0$. We aggregate these activation sets into $\mathcal{S} = (S^1, \ldots, S^{l})$, which we call an activation pattern. In this work, we may consider activation patterns up to a layer $l \leq L$. Activation patterns were previously defined in terms of strings~\citep{Raghu2017,Montufar2017}. We say that an input $\mathbf{x}$ corresponds to an activation pattern $\mathcal{S}$ 
if feeding $\mathbf{x}$ to the DNN results in the activations in $\mathcal{S}$.

We define linear regions as follows.\footnote{There is a subtly different definition of linear region in the literature~\citep{Pascanu2013,Montufar2014}, as follows. Given a {\sc PWL} function $F$, a linear region is a maximal connected subset of the input space on which $F$ is linear. The two definitions are essentially the same, except in a few degenerate cases. There could be scenarios where two different activation patterns may correspond to two adjacent regions with the same linear function, in which case this definition considers them as a single one. However, the bounds derived in this paper are valid for both definitions.}

\begin{defn}
Given a {\sc PWL} function $F:\mathbb{R}^{n_0}\rightarrow \mathbb{R}^{m}$ represented by a DNN, a linear region is the set of 
inputs 
that correspond to a same activation pattern  
in the DNN.
\label{def:linear_regions2}
\end{defn}

In this paper, we interchangeably refer to $\mathcal{S}$ as an activation pattern or a region for convenience.

In Figure~\ref{fig:regions_2d} we show a simple ReLU {\sc DNN} with two inputs $\{x_1,x_2\}$ and 3 layers.
The activation units $\{a,b,c,d,e,f\}$ on these layers can be thought of as hyperplanes that each divide the space in two. On one side of the hyperplane, the unit outputs a positive value. For all points on the other side of the hyperplane including itself, the unit outputs 0.

\begin{figure}[tp]
  \centering
    \includegraphics[width=\columnwidth]{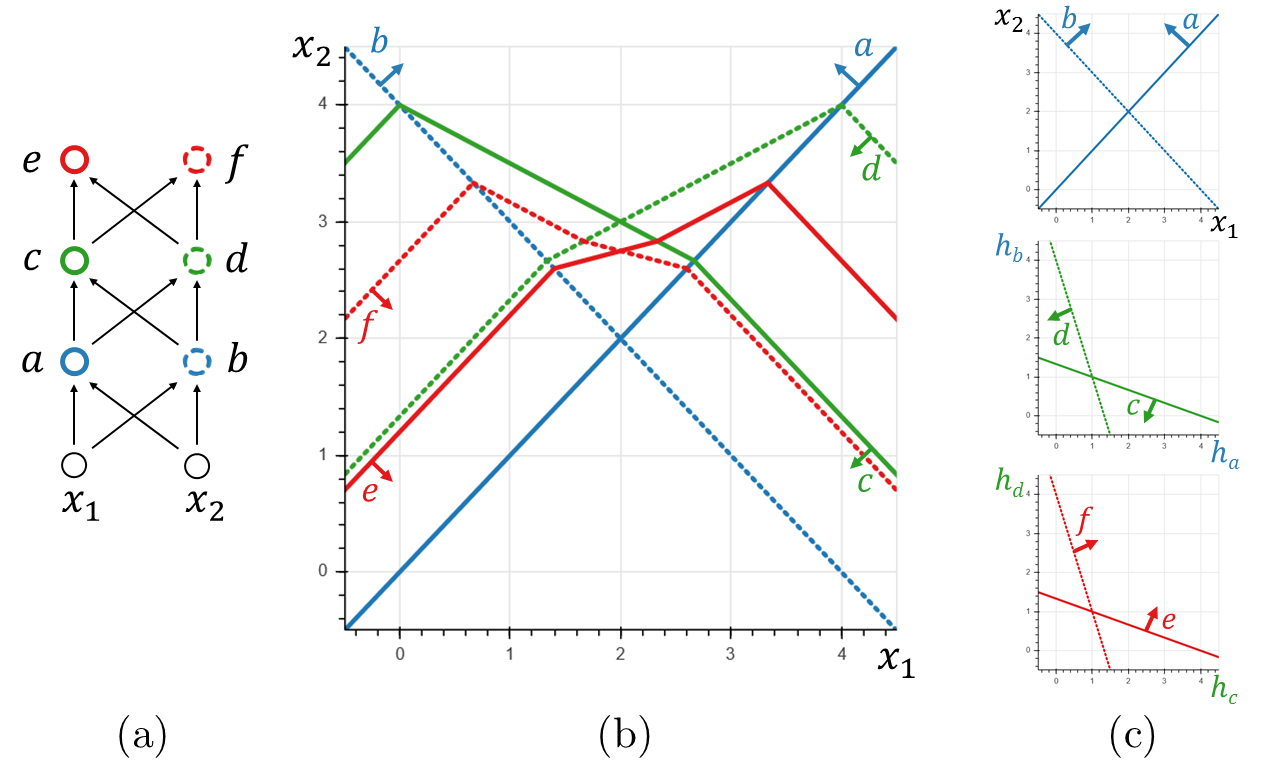}
\caption{\emph{(a) Simple {\sc DNN} with two inputs and three layers with 2 activation units each. (b) Visualization of the hyperplanes from the layers partitioning the input space into 20 linear regions. The arrows indicate the directions in which the corresponding neurons are activated. (c) Visualization of the hyperplanes from the first, second, and third (non-input) layers (from top to bottom) in the space given by the outputs of their respective previous layers. The outputs of the units are $h_a = \max\{0, -x_1 + x_2\}$, $h_b = \max\{0, x_1 + x_2 - 4\}$, $h_c = \max\{0, -h_a -3h_b + 4\}$, $h_d = \max\{0, -3h_a - h_b + 4\}$, $h_e = \max\{0, h_c + 3h_d - 4\}$, and $h_f = \max\{0, 3h_c + h_d - 4\}$.}}
	\label{fig:regions_2d}
\end{figure}

One may wonder: into how many linear regions do $n$ hyperplanes split a space? \citet{Zaslavsky1975} shows that an arrangement of $n$ hyperplanes divides a $d$-dimensional space into at most $\sum_{s=0}^d {n \choose s}$ regions, a bound that is attained when they are in general position. The term general position basically means 
that a small perturbation of the hyperplanes does not change the number of regions. This corresponds to the exact maximal number of regions of a single layer {\sc DNN} with $n$ ReLUs and input dimension $d$.

In Figures~\ref{fig:regions_2d}(b)--(c), we provide a visualization of how ReLUs partition the input space. Figure~\ref{fig:regions_2d}(c) shows the hyperplanes corresponding to the ReLUs at layers $l = 1, 2,$ and $3$, from top to bottom. Figure~\ref{fig:regions_2d}(b) considers these same hyperplanes in the input space $x$. If we consider only the first-layer hyperplanes, the 2D input space is partitioned into 4 regions, as per \citet{Zaslavsky1975} $\left( {2 \choose 0} + {2 \choose 1} + {2 \choose 2} = 4 \right)$. The regions are further partitioned as we consider additional layers. Subsequent hyperplanes are affected by the transformations applied in the earlier layers.

Figure~\ref{fig:regions_2d} also highlights that activation boundaries behave like hyperplanes when inside a region and may bend whenever they intersect with a boundary from a previous layer. This has also been pointed out by~\citet{Raghu2017}. In particular, they cannot appear twice in the same region as they are defined by a single hyperplane if we fix the region. Moreover, these boundaries do not need to be connected, as illustrated in Figure~\ref{fig:regions_1d}.

\begin{figure}[tp]
  \centering
    \includegraphics[width=\columnwidth]{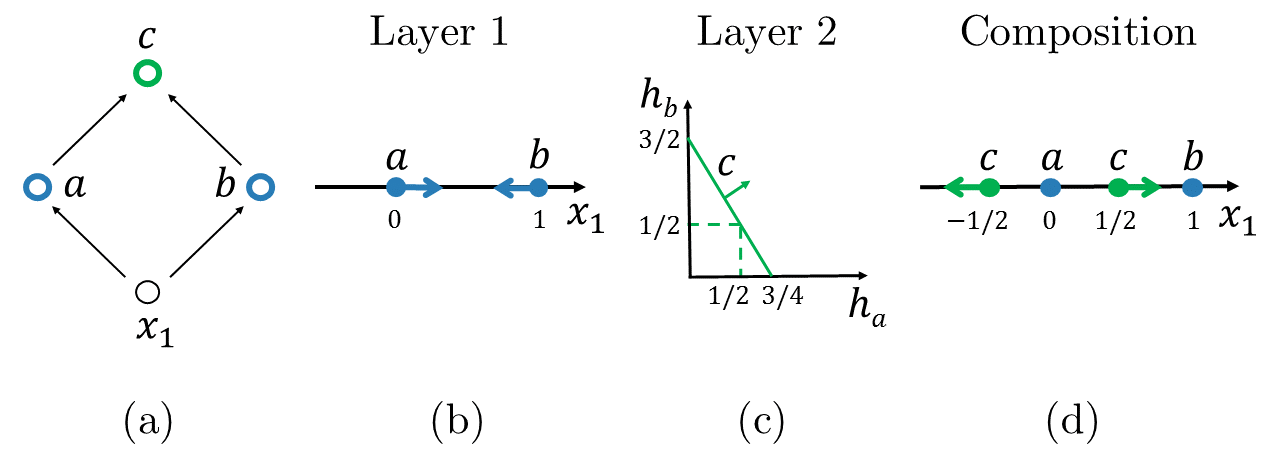}
  \caption{\emph{(a) A network with one input $x_1$ and three activation units $a$, $b$, and $c$. (b) We show the hyperplanes $x_1 = 0$ and $-x_1 + 1 = 0$ corresponding to the two activation units in the first hidden layer. In other words, the activation units are given by $h_a = \max\{0,x_1\}$ and $h_b = \max\{0,-x_1 + 1\}$. (c) The activation unit in the third layer is given by $h_c = \max\{0, 4h_a + 2h_b - 3\}$. (d) The activation boundary for neuron $c$ is disconnected.}}
	\label{fig:regions_1d}
\end{figure}

\section{Tighter Bounds for Rectifier Networks}\label{sec:bound_rectifier}

\citet{Montufar2014} derive an upper bound of $2^N$ for $N$  
units, which can be obtained by mapping linear regions to activation patterns. \citet{Raghu2017} improve this result by deriving an asymptotic upper bound of $O(n^{Ln_0})$ to the maximal number of regions, assuming $n_l = n$ for all layers $l$ and $n_0 = O(1)$. \citet{Montufar2017} further tightens the upper bound to $\prod_{l=1}^L \sum_{j=0}^{d_l} \binom{n_l}{j}$, where $d_l = \min\{n_0, n_1, \ldots, n_l\}$.

Moreover, \citet{Montufar2014} prove a lower bound of $\left(\prod_{l=1}^{L-1}\lfloor n_l/n_0 \rfloor^{n_0}\right) \sum_{j=0}^{n_0} \binom{n_L}{j}$ when $n \geq n_0$, or asymptotically $\Omega((n/n_0)^{(L-1)n_0} n^{n_0})$. \citet{Arora2018} present a lower bound of $2\sum_{j=0}^{n_0-1}\binom{m-1}{j}w^{L-1}$ where $2m = n_1$ and $w = n_l$ for all $l = 2, \ldots, L$. We derive both upper and lower bounds that improve upon these previous results.

\subsection{An Upper Bound on the Number of Linear Regions}\label{sec:upper_bound_rectifier}
In this section, we prove the following upper bound on the number of regions.

\begin{thm}\label{thm:upper_bound_improved}
Consider a deep rectifier network with $L$ layers, $n_l$ rectified linear units at each layer $l$, and an input of dimension $n_0$. The maximal number of regions of this neural network is at most
\begin{align*}
\sum_{(j_1,\ldots,j_L) \in J} \prod_{l=1}^L \binom{n_l}{j_l}
\end{align*}
where $J = \{(j_1, \ldots, j_L) \in \mathbb{Z}^L: 0 \leq j_l \leq \min\{n_0, n_1 - j_1, \ldots, n_{l-1} - j_{l-1}, n_l\}\ \forall l = 1, \ldots, L\}$. This bound is tight when $L = 1$.
\end{thm}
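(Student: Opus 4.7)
My plan is to prove Theorem~\ref{thm:upper_bound_improved} by induction on the number of layers $L$, tracking the dimension of the image of each linear region as it propagates through the network. For the base case $L = 1$, the linear regions coincide with the cells of the hyperplane arrangement formed by the $n_1$ units in $\mathbb{R}^{n_0}$, so Zaslavsky's theorem gives $\sum_{j_1=0}^{\min(n_0, n_1)} \binom{n_1}{j_1}$ as an upper bound, tight for hyperplanes in general position; this is the $L=1$ specialization of the stated formula and simultaneously establishes the tightness claim.

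For the inductive step, the first ingredient is a dimensional lemma: if a first-layer activation pattern $S^1$ has $s_1 = |S^1|$ active units, then the image of the corresponding input region under the first layer lies in an affine subspace of $\mathbb{R}^{n_1}$ of dimension at most $\min(n_0, s_1)$, bounded by $n_0$ because the layer map is affine in the $n_0$ inputs, and bounded by $s_1$ because only $s_1$ output coordinates are nonzero on this region. Treating the remaining layers $2, \ldots, L$ as a sub-network with effective input dimension $\min(n_0, s_1)$, the inductive hypothesis bounds the number of sub-regions within each first-layer region by $\sum_{(j_2, \ldots, j_L)} \prod_{l=2}^L \binom{n_l}{j_l}$, where the constraints on the $j_l$ inherit the recursive $\min$ form from the definition of $J$ with $\min(n_0, s_1)$ playing the role of $n_0$.

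Summing over all first-layer regions grouped by $s_1$ and reparameterizing via $j_1 = n_1 - s_1$, the constraints on $j_l$ for $l \geq 2$ rewrite exactly as $\min(n_0, n_1 - j_1, n_2 - j_2, \ldots, n_{l-1} - j_{l-1}, n_l)$, matching the theorem. What remains is to bound the number of realized first-layer patterns with $n_1 - s_1 = j_1$ inactive units by $\binom{n_1}{j_1}$, restricted to $j_1 \leq \min(n_0, n_1)$.

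I expect this refined first-layer count to be the main technical obstacle. The trivial subset-counting bound immediately gives at most $\binom{n_1}{j_1}$ patterns for each $j_1 \in \{0, \ldots, n_1\}$, but it does not by itself enforce the sharper cutoff $j_1 \leq n_0$ that the theorem imposes. I anticipate this cutoff comes from a Zaslavsky-style shelling or canonical-subset argument that assigns each realized first-layer activation pattern to a subset of hyperplanes of size at most $\min(n_0, n_1)$, in a manner compatible with the dimensional lemma. Once this refined first-layer counting is established, the remainder of the induction reduces to bookkeeping and yields the stated bound.
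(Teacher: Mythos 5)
Your overall architecture matches the paper's: partition layer by layer, bound the dimension of the image of each region via $\operatorname{dim}(\mathbf{h}^1(\mathcal{S})) \le \min\{n_0, |S^1|\}$, and use Zaslavsky's theorem (restricted to the row space of the relevant weight matrix, as in Lemma~\ref{lem:regions_rank}) both for the base case and for counting subregions. The gap is exactly where you suspect it is, but the resolution you anticipate does not work. You want either to show that every \emph{realized} first-layer pattern with $j_1$ inactive units satisfies $j_1 \le n_0$, or to build a canonical-subset assignment enforcing that cutoff "compatibly with the dimensional lemma." The first claim is false: take $n_0 = 1$, $n_1 = 3$ with $h_i = \max\{0, x - i\}$; the region $x < 1$ has all three units inactive, so $j_1 = 3 > n_0$. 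And the canonical subsets produced by a Zaslavsky-style shelling are determined by which hyperplanes support a face of the region, which bears no relation to the activation set of the region, so there is no reason such an assignment would send a region with assigned subset of size $j_1$ to an image of dimension at most $n_1 - j_1$.

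The paper closes this gap with an extremal rearrangement rather than a pattern-by-pattern certificate. Two constraints hold simultaneously: (i) the \emph{total} number of subregions cut by layer~1 is at most $\sum_{j=0}^{\min\{n_0,n_1\}}\binom{n_1}{j}$ (Lemma~\ref{lem:regions_rank_network}), and (ii) for each $s$, the number of regions with $|S^1| = s$ is at most $\binom{n_1}{s}$, the trivial subset bound. Since the count contributed by the remaining layers is nondecreasing in the image dimension, the worst case subject to (i) and (ii) is the greedy assignment that gives as many regions as possible the largest activation sets: $\binom{n_1}{j}$ regions with $|S^1| = n_1 - j$ for $j = 0, \ldots, \min\{n_0, n_1\}$, at which point the total budget from (i) is exhausted. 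This is simultaneously where the restriction $j_1 \le \min\{n_0, n_1\}$, the coefficient $\binom{n_1}{j_1}$, and the dimension $n_1 - j_1$ passed to layer~2 all come from; no claim is made that realized patterns with more than $n_0$ inactive units are impossible. With this substitution your induction (organized in the paper as a recurrence $R(l,d)$ over layers and image dimensions) goes through, and the rest of your outline, including the reparameterization $j_1 = n_1 - s_1$ and the base case, matches the paper.
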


Note that this is a stronger upper bound than the one that appeared in~\citet{Montufar2017}, which can be derived from this bound by relaxing the terms $n_l - j_l$ to $n_l$ and factoring the expression. When $n_0 = O(1)$ and all layers have the same width $n$, 
we have 
the same best known asymptotic bound $O(n^{Ln_0})$ first presented in~\citet{Raghu2017}. 

Two insights can be extracted from the above expression:

\begin{enumerate}
\item{\textbf{Bottleneck effect.} The bound is sensitive to the positioning of layers that are small relative to the others, a property we call the bottleneck effect. If we subtract a neuron from one of two layers with the same width, choosing the one closer to the input layer will lead to a larger (or equal) decrease in the bound. This occurs because each index $j_l$ is essentially limited by the widths of the current and previous layers, $n_0, n_1, \ldots, n_l$. In other words, smaller widths in the first few layers of the network imply a bottleneck on the bound.

The following proposition illustrates this bottleneck effect of the bound for the 2-layer case (see Appendix~\ref{sec:analysis_bound} for the proof).

\begin{prp}\label{prp:bottleneck_2_layer}
Consider a 2-layer network with widths $n_1, n_2$ and input dimension $n_0 > \max\{n_1, n_2\}$. Then moving a neuron from the first layer to the second layer strictly decreases the bound.
\end{prp}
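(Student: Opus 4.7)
The plan is to show that the difference $B(n_1,n_2) - B(n_1-1,n_2+1)$ of the two-layer instances of the Theorem~\ref{thm:upper_bound_improved} bound collapses, via two applications of Pascal's rule followed by Vandermonde's identity, to a single strictly positive binomial coefficient. Since $n_0 > \max\{n_1,n_2\}$ also gives $n_0 \geq n_1-1$ and $n_0 \geq n_2+1$, the $n_0$-clip in the definition of the set $J$ is slack on both sides, so I may write (using $\phi(m,k) := \sum_{j=0}^{\min(k,m)}\binom{m}{j}$, with $\phi(m,k)=0$ for $k<0$)
\[
B(n_1,n_2) = \sum_{j_1=0}^{n_1}\binom{n_1}{j_1}\,\phi(n_2,n_1-j_1), \quad B(n_1-1,n_2+1) = \sum_{j_1=0}^{n_1-1}\binom{n_1-1}{j_1}\,\phi(n_2+1,n_1-1-j_1).
\]

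Next, I expand $\binom{n_1}{j_1}=\binom{n_1-1}{j_1}+\binom{n_1-1}{j_1-1}$ and shift the index in the second piece, which turns $B(n_1,n_2)$ into a sum over $j_1=0,\dots,n_1-1$ with prefactor $\binom{n_1-1}{j_1}$ and bracket $\phi(n_2,n_1-j_1)+\phi(n_2,n_1-1-j_1)$. The symmetric move on the other side is to apply $\binom{n_2+1}{j_2}=\binom{n_2}{j_2}+\binom{n_2}{j_2-1}$ inside the inner sum of $B(n_1-1,n_2+1)$ and shift there as well, producing the same prefactor $\binom{n_1-1}{j_1}$ and bracket $\phi(n_2,n_1-1-j_1)+\phi(n_2,n_1-2-j_1)$. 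Subtracting, the middle $\phi(n_2,n_1-1-j_1)$ cancels and what remains is
\[
B(n_1,n_2) - B(n_1-1,n_2+1) = \sum_{j_1=0}^{n_1-1}\binom{n_1-1}{j_1}\bigl[\phi(n_2,n_1-j_1)-\phi(n_2,n_1-2-j_1)\bigr].
\]

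The telescoping identity $\phi(n_2,k)-\phi(n_2,k-1)=\binom{n_2}{k}$ (with the convention that $\binom{n_2}{j}$ vanishes outside $[0,n_2]$) turns the bracket into $\binom{n_2}{n_1-j_1}+\binom{n_2}{n_1-1-j_1}$, which by Pascal equals $\binom{n_2+1}{n_1-j_1}$. Vandermonde's identity then evaluates the remaining sum as $\binom{n_1+n_2}{n_1}$, which is strictly positive, proving the proposition. The main bookkeeping obstacle I anticipate is verifying that the index shifts and the saturation of $\min(k,n_2)$ inside $\phi$ remain consistent across all $j_1$ — in particular that the boundary contributions at $j_2=-1$ vanish and that the telescoping still produces the two binomials above when $n_1-j_1 > n_2$; the hypothesis $n_0 > \max\{n_1,n_2\}$ is precisely what lets me discard the $n_0$-clip before starting and thereby avoid an extra case split.
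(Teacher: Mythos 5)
Your argument is correct, and the explicit difference you obtain, $B(n_0,n_1,n_2)-B(n_0,n_1-1,n_2+1)=\binom{n_1+n_2}{n_1}>0$, matches what the paper's proof yields (in the appendix's indexing, $B(n_0,n_1+1,n_2)-B(n_0,n_1,n_2+1)=\binom{n_1+n_2+1}{n_1+1}$). The route is genuinely different, though. The paper first establishes Proposition~\ref{prp:two_layer_bound}: when $n_0\geq n_1$ and $n_0\geq n_2$, the generalized Vandermonde identity (Lemma~\ref{lem:vandermonde}) collapses the entire two-layer bound to the closed form $B(n_0,n_1,n_2)=\sum_{j=0}^{n_1}\binom{n_1+n_2}{j}$, a partial sum along a single row of Pascal's triangle truncated at the width of the first layer. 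The proposition is then immediate: the two quantities being compared are partial sums of the same row $n_1+n_2+1$, truncated at $n_1+1$ and at $n_1$ respectively, so they differ by exactly one strictly positive term. You instead keep the double sum and compute the difference head-on --- Pascal's rule on each factor, telescoping of the $\phi$'s, and Vandermonde only at the very end. Both are sound and both expose the same structural fact (the difference is a single binomial coefficient); the paper's closed form is the more economical tool here and is reused for the other two-layer results in Appendix~\ref{sec:analysis_bound}, whereas your computation is self-contained but carries more index bookkeeping, all of which you handle correctly: the boundary terms at $j_1=n_1$ and $j_2=-1$ do vanish, the identity $\phi(n_2+1,k)=\phi(n_2,k)+\phi(n_2,k-1)$ survives the saturation of the $\min$, and the hypothesis $n_0>\max\{n_1,n_2\}$ is exactly what makes the $n_0$-clip slack on both sides.
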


Figure~\ref{fig:insightsA} illustrates this behavior. For the solid line, we keep the total size of the network the same but shift from a small-to-large network (i.e., smaller width near the input layer and larger width near the output layer) to a large-to-small network in terms of width. We see that the bound monotonically increases as we reduce the bottleneck. If we add a layer of constant width at the end, represented by the dashed line, the bound decreases when the layers before the last become too small and create a bottleneck for the last layer.

While this is a property of the upper bound rather than of the exact maximal number of regions, 
we observe in Section~\ref{sec:exp} that empirical results for the number of regions of a trained network exhibit a behavior that resembles the bound as the width of the layers vary. Moreover, this bottleneck effect appears at a more fundamental level. For example, having a first layer of size one forces all hyperplanes corresponding to subsequent layers to be parallel to each other in the input space, reflecting the fact that we have compressed all information into a single value. More generally, the smaller a layer is, the more linearly dependent the hyperplanes from subsequent layers will be, which results in fewer regions. Further in this section, we formalize this in terms of dimension and show that the dimension of the image of a region is limited by the widths of earlier layers, which is used to prove Theorem~\ref{thm:upper_bound_improved}.
}

\item{\textbf{Deep vs shallow for large input dimensions.} In several applications such as imaging, the input dimension can be very large. \citet{Montufar2014} show that if the input dimension $n_0$ is constant, then the number of regions of deep networks is asymptotically larger than that of shallow (single-layer) networks. We complement this picture by establishing that if the input dimension is large, then shallow networks can attain more regions than deep networks.

More precisely, we compare a deep network with $L$ layers of equal width $n$ and a shallow network with one layer of width $Ln$. Denote the exact maximal number of regions by $R(n_0, n_1, \ldots, n_L)$, where $n_0$ is the input dimension and $n_1, \ldots, n_L$ are the widths of layers 1 through $L$ of the network.

\begin{cor}\label{cor:shallow_vs_deep}
Let $L \geq 2$, $n \geq 1$, and $n_0 \geq Ln$. Then
\begin{align*}
R(n_0, \underbrace{n, \ldots, n}_{\text{$L$ times}}) < R(n_0, Ln)
\end{align*}

Moreover, $\lim_{L \to \infty} \frac{R(n_0, n, \ldots, n)}{R(n_0, Ln)} = 0$.
\end{cor}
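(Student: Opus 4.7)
The plan is to bound the deep network from above via Theorem~\ref{thm:upper_bound_improved}, compute the shallow network exactly (using the fact that the theorem is tight for $L=1$), and compare the two. Since $n_0 \geq Ln$, Theorem~\ref{thm:upper_bound_improved} applied to the one-layer network of width $Ln$ is tight and gives $R(n_0, Ln) = \sum_{j=0}^{Ln} \binom{Ln}{j} = 2^{Ln}$. Applied to the deep network, and observing that $n_0 \geq n$ renders the $n_0$-entry in the $\min$ redundant, it yields
\begin{align*}
R(n_0, \underbrace{n, \ldots, n}_{L \text{ times}}) \leq \sum_{j \in J} \prod_{l=1}^L \binom{n}{j_l},
\end{align*}
where $J = \{(j_1, \ldots, j_L) \in \{0, 1, \ldots, n\}^L : j_i + j_l \leq n \text{ for all } 1 \leq i < l \leq L\}$.

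For the strict inequality I would simply exhibit a missing tuple: $(n, n, 0, \ldots, 0) \notin J$ (since $n + n > n$ when $n \geq 1$), and its would-be contribution $\binom{n}{n}^2 \binom{n}{0}^{L-2} = 1$ is positive. Because $J$ is a proper subset of the full grid $\{0, \ldots, n\}^L$ and every summand $\prod_l \binom{n}{j_l}$ is positive, the bound from Theorem~\ref{thm:upper_bound_improved} is strictly less than $\sum_{j \in \{0, \ldots, n\}^L} \prod_l \binom{n}{j_l} = 2^{Ln} = R(n_0, Ln)$, which yields the claimed strict inequality.

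For the limit I would recast the ratio probabilistically: let $X_1, \ldots, X_L$ be i.i.d.\ with $\Pr[X = k] = \binom{n}{k}/2^n$, so that the ratio of the above bound to $2^{Ln}$ equals $\Pr[(X_1, \ldots, X_L) \in J]$. The defining constraint of $J$ forces at most one $X_l$ to exceed $\lfloor n/2 \rfloor$, because any two coordinates both exceeding $\lfloor n/2 \rfloor$ sum to at least $2\lfloor n/2 \rfloor + 2 > n$. Letting $q = \Pr[X \leq \lfloor n/2 \rfloor]$, we have $q < 1$ since $\Pr[X = n] = 2^{-n} > 0$, and
\begin{align*}
\Pr[(X_1, \ldots, X_L) \in J] \;\leq\; q^L + L(1-q)q^{L-1} \;\xrightarrow{L \to \infty}\; 0.
\end{align*}

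The main obstacle is recognizing that the nested constraints defining $J$ collapse to the symmetric pairwise condition $j_i + j_l \leq n$; once this observation is made, the strict inequality follows from exhibiting a single missing tuple and the vanishing of the ratio reduces to a routine exponential tail bound.
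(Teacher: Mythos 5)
Your proof is correct, but it reaches both conclusions by a genuinely different route from the paper. The paper first establishes a quantitative, $n_0$-independent bound (Corollary~\ref{cor:bound_to_bound}): it pairs up consecutive layers, applies Vandermonde's identity to each pair to get $\bigl(\sum_{j=0}^{n}\binom{2n}{j}\bigr)^{L/2}$, and then controls the central binomial coefficient via Stirling to obtain $B(n_0,n,\ldots,n)\le 2^{Ln}\bigl(\tfrac12+\tfrac{1}{2\sqrt{\pi n}}\bigr)^{L/2}\sqrt{2}$; the strict inequality then requires checking that this prefactor is below $1$ (handled separately for $L=2$, where the $\sqrt{2}$ can be dropped, and $L\ge 3$), and the limit follows from the same explicit geometric decay. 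You instead work directly with the index set $J$ in its symmetric pairwise form $j_i+j_l\le n$ (which is exactly how the paper rewrites $J$ in its appendix, so that step is sound): for strictness you exhibit a single missing tuple $(n,n,0,\ldots,0)$ against the full-grid total $2^{Ln}$, which is cleaner and treats all $L\ge 2$ uniformly; for the limit you recast the ratio as the probability that i.i.d.\ $\mathrm{Bin}(n,1/2)$ variables all pairwise sum to at most $n$, observe that at most one can exceed $\lfloor n/2\rfloor$, and apply a union bound. Your argument is more elementary (no Stirling) and arguably more transparent, while the paper's buys an explicit, clean decay rate $\bigl(\tfrac12+\tfrac{1}{2\sqrt{\pi n}}\bigr)^{L/2}$ that it reuses as a standalone corollary; your rate $q^L+L(1-q)q^{L-1}$ is also exponential in $L$ but less explicit in $n$. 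All the individual steps check out: $2\lfloor n/2\rfloor+2>n$ for both parities, $q<1$ because $\Pr[X=n]=2^{-n}>0$, and $R(n_0,Ln)=2^{Ln}$ follows from the tightness of Theorem~\ref{thm:upper_bound_improved} at $L=1$ together with $n_0\ge Ln$.
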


This is a consequence of Theorem~\ref{thm:upper_bound_improved} (see Appendix~\ref{sec:analysis_bound} for the proof).

Figure~\ref{fig:insightsB}~(a) illustrates this behavior. As we increase the number of layers while keeping the total size of the network constant, the bound plateaus at a value lower than the exact maximal number of regions for shallow networks. Moreover, the number of layers that yields the highest bound decreases as we increase the input dimension $n_0$.
Hence, 
for a given number of units and $n_0$, 
there is a particular depth maximizing the bound. 

This property cannot be inferred from upper bounds derived in prior work: previous bounds for a network with $L$ layers of size $n$ are no smaller than $R(n_0, Ln)$ for a sufficiently large $n_0$. Figure~\ref{fig:insightsB}~(b) shows the behavior of the previous best known bound.

We remark that asymptotically both deep and shallow networks can attain exponentially many regions when the input dimension is sufficiently large. More precisely, for a {\sc DNN} with the same width $n$ per layer, the maximal number of linear regions is $\Omega(2^{\frac{2}{3}Ln})$ when $n_0 \geq n / 3$ (see Appendix~\ref{sec:exp_number_regions}).
}
\end{enumerate}

\definecolor{color1}{HTML}{E41A1C}
\definecolor{color2}{HTML}{377DB8}
\definecolor{color3}{HTML}{4DAF4A}
\definecolor{color4}{HTML}{984EA3}
\definecolor{color5}{HTML}{FF7F00}
\definecolor{color6}{HTML}{A65628}

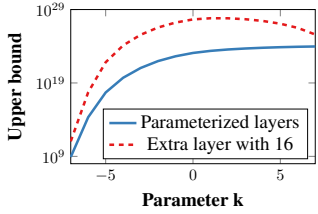
\begin{figure}[htp]
\centering
\begin{tikzpicture}[scale=0.6]
\begin{semilogyaxis}[filter discard warning=false, enlargelimits=false, 
height=5cm,
width=7cm,
xlabel={\large \textbf{Parameter k}},xmin=-7, xmax=7,
ylabel={\large \textbf{Upper bound}},ymin=1e8,ymax=1e29, 
every axis y label/.style={at={(ticklabel cs:0.5)},rotate=90,anchor=near ticklabel}, 
legend pos = south east
]
\addplot[ultra thick,color=color2] table [x index=0, y index=1, col sep=comma] {plot_data_input_size_A.txt} ;
\addplot[dashed,ultra thick, color=color1] table [x index=0, y index=2, col sep=comma] {plot_data_input_size_A.txt} ;
\legend{\large Parameterized layers,\large Extra layer with 16}
\end{semilogyaxis}
\end{tikzpicture}
\caption{\emph{Upper bound from Theorem~\ref{thm:upper_bound_improved}, in semilog scale, for input dimension $n_0 = 32$ and the width of the first five layers parameterized as $16+2k,16+k,16,16-k,16-2k$.}}\label{fig:insightsA}
\end{figure}

\begin{figure}[htp]
\centering
\begin{tikzpicture}[scale=0.6]
\begin{axis}[filter discard warning=false, enlargelimits=false, 
height=5cm,
width=6cm,
title={\Large (a)~Theorem~\ref{thm:upper_bound_improved}},
title style={at={(0.5,-0.35)},anchor=north,yshift=-0.1},
xlabel={\large \textbf{Input dimension n$_0$}},xmin=1, xmax=42,
ylabel={\large \textbf{Upper bound}},
ymin=0,ymax=1.2e18, 
every axis y label/.style={at={(ticklabel cs:0.5)},rotate=90,anchor=near ticklabel}, 
legend style={at={(1.1,0.5)},anchor=west}
]
\addplot[ultra thick,color=color1] table [x index=0, y index=1, col sep=comma] {plot_data_input_size_B.txt} ;
\addplot[ultra thick,color=color2] table [x index=0, y index=2, col sep=comma] {plot_data_input_size_B.txt} ;
\addplot[ultra thick,color=color3] table [x index=0, y index=3, col sep=comma] {plot_data_input_size_B.txt} ;
\addplot[ultra thick,color=color4] table [x index=0, y index=4, col sep=comma] {plot_data_input_size_B.txt} ;
\addplot[ultra thick,color=color5] table [x index=0, y index=5, col sep=comma] {plot_data_input_size_B.txt} ;
\addplot[ultra thick,color=color6] table [x index=0, y index=6, col sep=comma] {plot_data_input_size_B.txt} ;
\end{axis}
\end{tikzpicture}
\begin{tikzpicture}[scale=0.6]
\begin{axis}[filter discard warning=false, enlargelimits=false, 
height=5cm,
width=6cm,
title={\Large (b)~\citet{Montufar2017}},
title style={at={(0.5,-0.35)},anchor=north,yshift=-0.1},
xlabel={\large \textbf{Input dimension n$_0$}},xmin=1, xmax=42,
ymin=0,ymax=1.2e18, 
every axis y label/.style={at={(ticklabel cs:0.5)},rotate=90,anchor=near ticklabel}, 
legend style={at={(1.1,0.5)},anchor=west}
]
\addplot[ultra thick,color=color1] table [x index=0, y index=1, col sep=comma] {plot_data_input_size_B-Montufar.txt} ;
\addplot[ultra thick,color=color2] table [x index=0, y index=2, col sep=comma] {plot_data_input_size_B-Montufar.txt} ;
\addplot[ultra thick,color=color3] table [x index=0, y index=3, col sep=comma] {plot_data_input_size_B-Montufar.txt} ;
\addplot[ultra thick,color=color4] table [x index=0, y index=4, col sep=comma] {plot_data_input_size_B-Montufar.txt} ;
\addplot[ultra thick,color=color5] table [x index=0, y index=5, col sep=comma] {plot_data_input_size_B-Montufar.txt} ;
\addplot[ultra thick,color=color6] table [x index=0, y index=6, col sep=comma] {plot_data_input_size_B-Montufar.txt} ;
\legend{1x60,2x30,3x20,4x15,5x12,6x10}
\end{axis}
\end{tikzpicture}
\caption{\emph{Comparison of bounds for evenly distributing 60 neurons in 1 to 6 layers, where the single-layer case is exact, according to the input dimension for (a)~Theorem~\ref{thm:upper_bound_improved} and (b)~\citet{Montufar2017}.}}\label{fig:insightsB}
\end{figure}
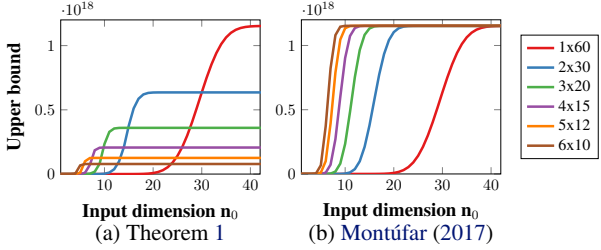

We now build towards the proof of Theorem~\ref{thm:upper_bound_improved}. For a given activation set $S^l$ and a matrix $W$ with $n_l$ rows, let $\sigma_{S^l}(W)$ be the operation that zeroes out the rows of $W$ that are inactive according to $S^l$. This represents the effect of the ReLUs. We say that a region is at layer $l$ if it corresponds to an activation pattern $(S^1, \ldots, S^l)$ up to layer $l$. For a region $\mathcal{S}$ at layer $l-1$, define $\bar{W}^l_{\mathcal{S}} := W^l \; \sigma_{S^{l-1}}(W^{l-1}) \cdots \sigma_{S^1}(W^1)$.

Each region $\mathcal{S}$ at layer $l-1$ may be partitioned by a set of hyperplanes defined by the neurons of layer $l$. When viewed in the input space, these hyperplanes are the rows of $\bar{W}^l_{\mathcal{S}}x + b = 0$ for some $b$. To verify this, note that, if we recursively substitute out the hidden variables $h_{l-1}, \ldots, h_1$ from the original hyperplane $W^l h_{l-1} + b_l = 0$ following $\mathcal{S}$, the resulting weight matrix applied to $x$ is $\bar{W}^l_{\mathcal{S}}$.

A central element in the proof of Theorem~\ref{thm:upper_bound_improved} is the dimension of the image of a linear region $\mathcal{S}$ under the DNN function $\mathbf{h}^{l}$ up to a layer $l$, which we denote by $\operatorname{dim}(\mathbf{h}^{l}(\mathcal{S}))$. For a fixed region $\mathcal{S}$ at layer $l$, 
that function 
is linear, and thus the dimension equals to the rank of the coefficient matrix. 
Therefore, $\operatorname{dim}(\mathbf{h}^{l}(\mathcal{S})) = \operatorname{rank}(\sigma_{S^{l}}(W^{l}) \cdots \sigma_{S^1}(W^1))$. This can be interpreted as the dimension of the space corresponding to $\mathcal{S}$ that the hyperplanes defined by $W^{l+1} h_l + b_{l+1} = 0$ effectively partitions. A key observation 
is that, once this dimension falls to a certain value, the regions contained in $\mathcal{S}$ at subsequent layers cannot recover to a higher dimension.

\citet{Zaslavsky1975} showed that the maximal number of regions in $\mathbb{R}^d$ induced by an arrangement of $m$ hyperplanes is at most $\sum_{j=0}^{d} \binom{m}{j}$. Moreover, this value is attained if and only if the hyperplanes are in general position. The lemma below tightens this bound for a special case where the hyperplanes may not be in general position.

\begin{lem}\label{lem:regions_rank}
Consider $m$ hyperplanes in $\mathbb{R}^d$ defined by the rows of $Wx + b = 0$. Then the number of regions induced by the hyperplanes is at most $\sum_{j=0}^{\operatorname{rank}(W)} \binom{m}{j}$.
\end{lem}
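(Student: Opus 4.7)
The plan is to reduce the arrangement in $\mathbb{R}^d$ to an arrangement in a space of dimension $r := \operatorname{rank}(W)$, and then invoke Zaslavsky's classical bound $\sum_{j=0}^{r}\binom{m}{j}$ on that smaller space. The key observation is that when $W$ has rank strictly less than $d$, every hyperplane $\{x : W_i x + b_i = 0\}$ is a "cylinder" over the null space $N := \ker(W)$: for any $x \in \mathbb{R}^d$ and any $v \in N$, we have $W(x+v) + b = Wx + b$, so membership in each hyperplane (and, more importantly, the sign of $W_i x + b_i$ for each row $i$) depends only on the class of $x$ in the quotient $\mathbb{R}^d / N$.

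Concretely, I would let $V \subseteq \mathbb{R}^d$ be any linear complement of $N$ (for instance the row space of $W$), so that $V$ has dimension $r$ and $\mathbb{R}^d = V \oplus N$. Write the projection $\pi : \mathbb{R}^d \to V$ along $N$. Each hyperplane $H_i = \{x : W_i x + b_i = 0\}$ satisfies $H_i = \pi^{-1}(H_i \cap V)$, because $H_i$ is invariant under translation by $N$. Therefore the hyperplanes $\{H_i \cap V\}_{i=1}^{m}$ form an arrangement of $m$ (possibly degenerate) hyperplanes in $V \cong \mathbb{R}^r$, and every region of the original arrangement in $\mathbb{R}^d$ is of the form $R \times N$ for a unique region $R$ of the restricted arrangement in $V$. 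This gives a bijection between regions in $\mathbb{R}^d$ and regions in $V$.

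With the bijection in hand, I would apply Zaslavsky's theorem to the arrangement in $V$: $m$ hyperplanes in a space of dimension $r$ induce at most $\sum_{j=0}^{r}\binom{m}{j}$ regions, regardless of whether they are in general position. Substituting $r = \operatorname{rank}(W)$ yields the stated bound. A minor point to handle is the degenerate case in which some row $W_i$ is zero: then $H_i$ is either empty (if $b_i \neq 0$) or all of $\mathbb{R}^d$ (if $b_i = 0$), and in either case it contributes nothing to the count, so the argument still goes through.

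The main obstacle, if any, is simply being careful about the definition of "region" so that the bijection is valid: it must be phrased in terms of sign patterns of $(W_1 x + b_1, \ldots, W_m x + b_m)$, or equivalently as connected components of $\mathbb{R}^d \setminus \bigcup_i H_i$, both of which transfer cleanly under $\pi$ because each $H_i$ is $N$-invariant. Once that is set up, the proof is essentially Zaslavsky applied after quotienting by the kernel of $W$.
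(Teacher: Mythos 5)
Your proposal is correct and follows essentially the same route as the paper: both arguments exploit that the sign pattern of $Wx+b$ is invariant under translation by $\ker(W)$, establish a bijection between regions in $\mathbb{R}^d$ and regions of the restricted arrangement in a complement of the kernel (the paper uses the row space via orthogonal decomposition, you allow any complement), and then apply Zaslavsky's bound in dimension $\operatorname{rank}(W)$. The only cosmetic difference is your explicit handling of zero rows, which the paper leaves implicit.
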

The proof is given in Appendix~\ref{sec:proof_regions_rank}. Its key idea is that it suffices to count regions within the row space of $W$. The next lemma brings Lemma~\ref{lem:regions_rank} into our context.

\begin{lem}\label{lem:regions_rank_network}
The number of regions induced by the $n_l$ neurons at layer $l$ within a certain region $\mathcal{S}$ is at most $\sum_{j=0}^{\min\{n_l, \operatorname{dim}(\mathbf{h}^{l-1}(\mathcal{S}))\}} \binom{n_l}{j}$.
\end{lem}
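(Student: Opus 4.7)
The plan is to reduce the statement to Lemma~\ref{lem:regions_rank} applied to the $n_l$ hyperplanes that the layer-$l$ neurons induce inside $\mathcal{S}$, and then to bound the rank of the associated weight matrix by $\operatorname{dim}(\mathbf{h}^{l-1}(\mathcal{S}))$ and by $n_l$.

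First I would invoke the observation already made in the text: when viewed in the input space, the $n_l$ hyperplanes defined by the layer-$l$ neurons, restricted to the region $\mathcal{S}$, are exactly the rows of $\bar{W}^l_{\mathcal{S}}\, x + b = 0$ for an appropriate vector $b$. Since restricting an arrangement to a subset of $\mathbb{R}^{n_0}$ can only decrease the number of regions, Lemma~\ref{lem:regions_rank} applied to this arrangement in $\mathbb{R}^{n_0}$ yields the bound
\[
\sum_{j=0}^{\operatorname{rank}(\bar{W}^l_{\mathcal{S}})} \binom{n_l}{j}
\]
on the number of regions induced by the layer-$l$ neurons inside $\mathcal{S}$.

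Second, I would bound $\operatorname{rank}(\bar{W}^l_{\mathcal{S}})$ by $\min\{n_l,\operatorname{dim}(\mathbf{h}^{l-1}(\mathcal{S}))\}$. The inequality $\operatorname{rank}(\bar{W}^l_{\mathcal{S}}) \leq n_l$ is immediate because $\bar{W}^l_{\mathcal{S}}$ has only $n_l$ rows. For the other side, write $\bar{W}^l_{\mathcal{S}} = W^l \cdot M$ with $M := \sigma_{S^{l-1}}(W^{l-1}) \cdots \sigma_{S^1}(W^1)$; the identity $\operatorname{dim}(\mathbf{h}^{l-1}(\mathcal{S})) = \operatorname{rank}(M)$ recorded in the text, combined with subadditivity of rank under matrix multiplication, gives $\operatorname{rank}(\bar{W}^l_{\mathcal{S}}) \leq \operatorname{rank}(M) = \operatorname{dim}(\mathbf{h}^{l-1}(\mathcal{S}))$.

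Finally, I would observe that $\sum_{j=0}^{r} \binom{n_l}{j}$ is monotonically nondecreasing in $r$ for $r \leq n_l$, so replacing $\operatorname{rank}(\bar{W}^l_{\mathcal{S}})$ by the larger quantity $\min\{n_l,\operatorname{dim}(\mathbf{h}^{l-1}(\mathcal{S}))\}$ can only weaken the bound, yielding the desired expression. There is no real obstacle here: all three ingredients (the expression for the layer-$l$ hyperplanes in the input space, Lemma~\ref{lem:regions_rank}, and the rank bound through the ReLU-masked product) are already prepared in the text, and the only point worth being careful about is noting that restricting the Zaslavsky arrangement to $\mathcal{S}$ does not increase the region count.
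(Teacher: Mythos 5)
Your proposal is correct and follows essentially the same route as the paper's proof: express the layer-$l$ hyperplanes inside $\mathcal{S}$ via $\bar{W}^l_{\mathcal{S}}x+b=0$, bound $\operatorname{rank}(\bar{W}^l_{\mathcal{S}})$ by $\min\{n_l,\operatorname{dim}(\mathbf{h}^{l-1}(\mathcal{S}))\}$ using rank subadditivity, and apply Lemma~\ref{lem:regions_rank}. The extra remarks you add (restriction to $\mathcal{S}$ not increasing the count, and monotonicity of the partial binomial sum) are correct details that the paper leaves implicit.
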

\begin{proof}
The hyperplanes in a region $\mathcal{S}$ of the input space are given by the rows of $\bar{W}^l_{\mathcal{S}}x + b = 0$ for some $b$. 
By definition, 
the rank of $\bar{W}^l_{\mathcal{S}}$ is upper bounded by $\min\{\operatorname{rank}(W^l), \operatorname{rank}(\sigma_{S^{l-1}}(W^{l-1}) \cdots \sigma_{S^{1}}(W^{1}))\} = \min\{\operatorname{rank}(W^l), \operatorname{dim}(\mathbf{h}^{l-1}(\mathcal{S}))\}$. That is, $\operatorname{rank}(\bar{W}^l_{\mathcal{S}}) \leq \min\{n_l, \operatorname{dim}(\mathbf{h}^{l-1}(\mathcal{S}))\}$, 
and we apply Lemma~\ref{lem:regions_rank}. 
\end{proof}

In the next lemma, we show that the dimension of the image of a region $\mathcal{S}$ can be bounded recursively in terms of the dimension of the image of the region containing $\mathcal{S}$ and the number of activated neurons defining $\mathcal{S}$.

\begin{lem}\label{lem:dim_bound}
Let $\mathcal{S}$ be a region at layer $l$ and $\mathcal{S}'$ be the region at layer $l-1$ that contains it. Then $\operatorname{dim}(\mathbf{h}^{l}(\mathcal{S})) \leq \min\{|S^{l}|, \operatorname{dim}(\mathbf{h}^{l-1}(\mathcal{S}'))\}$.
\end{lem}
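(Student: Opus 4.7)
The plan is to reduce the claim directly to the rank characterization of $\operatorname{dim}(\mathbf{h}^l(\mathcal{S}))$ given in the paragraph preceding Lemma~\ref{lem:regions_rank_network}: for a region $\mathcal{S}$ at layer $l$ with activation pattern $(S^1,\ldots,S^l)$, the map $\mathbf{h}^l$ is linear on $\mathcal{S}$ with coefficient matrix $\sigma_{S^l}(W^l)\sigma_{S^{l-1}}(W^{l-1})\cdots\sigma_{S^1}(W^1)$, and $\operatorname{dim}(\mathbf{h}^l(\mathcal{S}))$ equals the rank of this matrix. Since $\mathcal{S}\subseteq\mathcal{S}'$, the two regions must induce the same activations in layers $1,\ldots,l-1$, so $\mathcal{S}'$ has activation pattern $(S^1,\ldots,S^{l-1})$ and $\operatorname{dim}(\mathbf{h}^{l-1}(\mathcal{S}'))=\operatorname{rank}(\sigma_{S^{l-1}}(W^{l-1})\cdots\sigma_{S^1}(W^1))$.

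Writing $A:=\sigma_{S^l}(W^l)\cdots\sigma_{S^1}(W^1)$ and $B:=\sigma_{S^{l-1}}(W^{l-1})\cdots\sigma_{S^1}(W^1)$, we have the factorization $A=\sigma_{S^l}(W^l)\cdot B$. The next step is to apply the standard submultiplicativity inequality $\operatorname{rank}(XY)\leq\min\{\operatorname{rank}(X),\operatorname{rank}(Y)\}$ with $X=\sigma_{S^l}(W^l)$ and $Y=B$, which yields
\begin{equation*}
\operatorname{dim}(\mathbf{h}^l(\mathcal{S}))=\operatorname{rank}(A)\leq\min\bigl\{\operatorname{rank}(\sigma_{S^l}(W^l)),\;\operatorname{dim}(\mathbf{h}^{l-1}(\mathcal{S}'))\bigr\}.
\end{equation*}

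It remains to observe that $\sigma_{S^l}(W^l)$ is obtained from $W^l$ by zeroing out every row whose index lies outside $S^l$, so it has at most $|S^l|$ nonzero rows and hence $\operatorname{rank}(\sigma_{S^l}(W^l))\leq|S^l|$. Substituting this into the previous display gives exactly the claimed bound $\operatorname{dim}(\mathbf{h}^l(\mathcal{S}))\leq\min\{|S^l|,\operatorname{dim}(\mathbf{h}^{l-1}(\mathcal{S}'))\}$.

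There is no real obstacle: the argument is three short steps (rank characterization, submultiplicativity of rank, row-support bound on $\sigma_{S^l}(W^l)$). The only subtlety to make explicit is that $\mathcal{S}\subseteq\mathcal{S}'$ forces the first $l-1$ activation sets of the two regions to coincide, which is what allows us to identify the product $B$ appearing in the factorization of $A$ with the coefficient matrix computing $\operatorname{dim}(\mathbf{h}^{l-1}(\mathcal{S}'))$.
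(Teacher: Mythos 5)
Your argument is correct and follows the paper's own proof essentially verbatim: the same rank characterization of $\operatorname{dim}(\mathbf{h}^{l}(\mathcal{S}))$, the same submultiplicativity bound $\operatorname{rank}(XY)\leq\min\{\operatorname{rank}(X),\operatorname{rank}(Y)\}$, and the same observation that zeroed-out rows cap $\operatorname{rank}(\sigma_{S^l}(W^l))$ at $|S^l|$. Your explicit remark that $\mathcal{S}\subseteq\mathcal{S}'$ forces the first $l-1$ activation sets to coincide is a detail the paper leaves implicit, but it is not a different approach.
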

\begin{proof}
$\operatorname{dim}(\mathbf{h}^{l}(\mathcal{S})) = \operatorname{rank}(\sigma_{S^{l}}(W^{l}) \cdots \sigma_{S^1}(W^1)) \leq \min\{\operatorname{rank}(\sigma_{S^{l}}(W^{l})), \operatorname{rank}(\sigma_{S^{l-1}}(W^{l-1}) \cdots \sigma_{S^1}(W^1)) \leq \min\{|S^{l}|, \operatorname{dim}(\mathbf{h}^{l-1}(\mathcal{S}'))\}$. 
The last inequality comes from 
zeroed out rows not counting 
towards the 
matrix rank. 
\end{proof}

We now have the ingredients to prove Theorem~\ref{thm:upper_bound_improved}. 

\begin{proof}[Proof of Theorem~\ref{thm:upper_bound_improved}]
As illustrated in Figure~\ref{fig:regions_2d}, the partitioning can be 
regarded
as a sequential process: at each layer, we partition the regions obtained from the previous layer. 
When viewed in the input space, each region $\mathcal{S}$ obtained at layer $l-1$ is potentially partitioned by $n_l$ hyperplanes given by the rows of $\bar{W}^l_{\mathcal{S}}x + b = 0$ for some bias $b$. Some 
hyperplanes may fall outside the interior of $\mathcal{S}$ and 
have~no~effect. 

With this process in mind, we recursively bound the number of subregions within a region. More precisely, we construct a recurrence $R(l,d)$ to 
upper bound 
the maximal number of regions attainable from partitioning a region with image of dimension $d$ using units from layers $l, l+1, \ldots, L$. The base case 
is given by Lemma~\ref{lem:regions_rank_network}: $R(L,d) = \sum_{j=0}^{\min\{n_L,d\}} \binom{n_L}{j}$. Based on Lemma~\ref{lem:dim_bound}, 
the recurrence groups 
together regions with 
same activation set size $|S^l|$, as follows: $R(l,d) = \sum_{j=0}^{n_l} N_{n_l, d, j} R(l+1, \min\{j, d\})$ for all $l = 1, \ldots, L-1$. Here, $N_{n_l, d, j}$ represents the maximum number of regions with $|S^l| = j$ 
from 
partitioning a space of dimension $d$ with $n_l$ hyperplanes. We bound this value next.

For each $j$, there are at most $\binom{n_l}{j}$ regions with $|S^l| = j$, as they can be viewed as subsets of $n_l$ neurons of size $j$. In total, Lemma~\ref{lem:regions_rank_network} states that there are at most $\sum_{j=0}^{\min\{n_l, d\}} \binom{n_l}{j}$ regions. If we allow these regions to have the highest $|S^l|$ possible, for each $j$ from $0$ to $\min\{n_l, d\}$ we have at most $\binom{n_l}{n_l - j} = \binom{n_l}{j}$ regions with $|S^l| = n_l - j$.

Therefore, we can write the recurrence as
\begin{align*}
R(l, d) =
\begin{cases}
\displaystyle\sum_{j = 0}^{\min\{n_l, d\}} \binom{n_l}{j} R(l + 1, \min\{n_l - j, d\})\\
\hspace{10em}\text{if $1 \leq l \leq L - 1$,}\\
\displaystyle\sum_{j = 0}^{\min\{n_L, d\}} \binom{n_L}{j}\hspace{7em}\text{if $l = L$.}
\end{cases}
\end{align*}

The recurrence $R(1, n_0)$ can be unpacked to
\begin{align*}
\sum_{j_1=0}^{\min\{n_1, d_1\}} \binom{n_1}{j_1} \sum_{j_2=0}^{\min\{n_2, d_2\}} \binom{n_2}{j_2} \cdots \sum_{j_L=0}^{\min\{n_L, d_L\}} \binom{n_L}{j_L}
\end{align*}
where $d_l = \min\{n_0, n_1 - j_1, \ldots, n_{l-1} - j_{l-1}\}$. This can be made more compact, resulting in the final expression.

The bound is tight when $L = 1$ since it becomes $\sum_{j=0}^{\min\{n_0, n_1\}} \binom{n_1}{j}$, which is the maximal number of regions of a single-layer network.
\end{proof}

As a side note, Theorem~\ref{thm:upper_bound_improved} can be further tightened if the weight matrices are known to have small rank. The bound from Lemma~\ref{lem:regions_rank_network} can be rewritten as $\sum_{j=0}^{\min\{\operatorname{rank}(W^l), \operatorname{dim}(\mathbf{h}^{l-1}(\mathcal{S}))\}} \binom{n_l}{j}$ if we do not relax $\operatorname{rank}(W^l)$ to $n_l$ in the proof. The term $\operatorname{rank}(W^l)$ follows through the proof of Theorem~\ref{thm:upper_bound_improved} and the index set $J$ in the theorem becomes $\{(j_1, \ldots, j_L) \in \mathbb{Z}^L: 0 \leq j_l \leq \min\{n_0, n_1 - j_1, \ldots, n_{l-1} - j_{l-1}, \operatorname{rank}(W^l)\}\ \forall l \geq 1\}$.

A key insight from Lemmas~\ref{lem:regions_rank_network} and~\ref{lem:dim_bound} is that the dimensions of the images of the regions are non-increasing as we move through the layers partitioning them. In other words, if at any layer the dimension of the image of a region becomes small, then that region will not be able to be further partitioned into a large number of regions. For instance, if the dimension of the image of a region falls to zero, then that region will never be further partitioned. This suggests that if we want to have many regions, we need to keep dimensions high. We use this idea in the next section to construct a DNN with many regions.

\subsection{The Case of Dimension One}\label{sec:1D_analysis}

If the input dimension $n_0$ is equal to 1 and $n_l = n$ for all layers $l$, the upper bound presented in the previous section reduces to $(n+1)^L$. On the other hand, the lower bound given by~\citet{Montufar2014} becomes $n^{L-1}(n+1)$. It is then natural to ask: are either of these bounds tight? The answer is that the upper bound is tight in the case of $n_0 = 1$, assuming there are sufficiently many neurons.

\begin{thm}\label{thm:dimension_one}
Consider a deep rectifier network with $L$ layers, $n_l \geq 3$ rectified linear units at each layer $l$, and an input of dimension 1. The maximal number of regions of this neural network is exactly $\prod_{l=1}^L (n_l + 1)$.
\end{thm}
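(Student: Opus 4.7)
The plan is to prove both directions—upper bound and lower bound—separately, and the upper bound is essentially free from what has already been established.

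For the upper bound, I would apply Theorem~\ref{thm:upper_bound_improved} with $n_0 = 1$. The constraint defining $J$ becomes $0 \leq j_l \leq \min\{1, n_1 - j_1, \ldots, n_{l-1} - j_{l-1}, n_l\}$. A quick induction on $l$ shows that $j_l \in \{0, 1\}$: the bound $j_l \leq n_0 = 1$ gives $j_l \leq 1$, and the constraints $j_l \leq n_{l-1} - j_{l-1}$ for $l \geq 2$ are automatically satisfied since $n_{l-1} \geq 3$ and $j_{l-1} \leq 1$ imply $n_{l-1} - j_{l-1} \geq 2 \geq 1$. Thus the sum telescopes: $\sum_{j \in \{0,1\}^L} \prod_{l=1}^L \binom{n_l}{j_l} = \prod_{l=1}^L \left(\binom{n_l}{0} + \binom{n_l}{1}\right) = \prod_{l=1}^L (n_l + 1)$.

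For the lower bound, I would exhibit a network achieving this count. The natural strategy is inductive: maintain the invariant that after $l$ layers, the network produces $K_l := \prod_{j=1}^l (n_j+1)$ regions, and moreover there is a linear combination $\phi_l(x) = v_l^\top h^l(x)$ that, restricted to each of these regions, is a linear (non-constant) function of $x$ whose image is a common interval, say $[0,1]$—a ``zigzag'' of $K_l$ teeth. Given this invariant, layer $l+1$'s $n_{l+1}$ ReLUs can be designed to activate at thresholds $\phi_l(x) = t_k$ for distinct $t_k$ strictly inside $[0,1]$; because $\phi_l$ covers $[0,1]$ once per region, each threshold contributes exactly one activation boundary per region in input space, yielding $n_{l+1}$ cuts per region and therefore $n_{l+1}+1$ subregions. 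One then verifies that $\phi_{l+1}$ can be recovered as a linear combination of the new ReLU outputs, preserving the invariant. The base case $L=1$ is Zaslavsky's bound with hyperplanes in general position.

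The main obstacle is twofold: (i) starting the induction at layer $1$ with $n_1 \geq 3$ ReLUs on the scalar input so that all $n_1 + 1$ regions have non-constant $h^1$ (mixing activation directions avoids a ``dead'' constant-zero region that can occur when all ReLUs activate on the same side of their breakpoints); and (ii) showing that with $n_l \geq 3$ there is enough room to simultaneously (a) place $n_l$ activation boundaries inside each existing region and (b) still linearly reconstruct the next zigzag $\phi_l$ from the $n_l$ outputs alone (note that the output layer of the overall DNN has no bias, so any constant baseline must come from an ``always-on'' ReLU). The hypothesis $n_l \geq 3$ is what makes this feasible: with $n_l = 2$, the three slope vectors of $h^l$ across regions are forced to be $(0, s_2), (s_1, s_2), (s_1, 0)$, and the identity $w\cdot(s_1,s_2) = w\cdot(s_1,0) + w\cdot(0,s_2)$ prevents any linear functional from alternating sign across the three regions, ruling out a simultaneous cut of all of them. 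With $n_l \geq 3$ one gains an extra degree of freedom that breaks this obstruction, which I would verify by a direct algebraic construction (exhibiting explicit weights $w$ and a bias $b$ whose argument oscillates across the $K_l$ breakpoints).
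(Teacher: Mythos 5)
Your proposal is correct and follows essentially the same route as the paper: the upper bound is the $n_0=1$ specialization of Theorem~\ref{thm:upper_bound_improved} (where the index set collapses to $j_l\in\{0,1\}$ and the sum factors into $\prod_l(n_l+1)$), and the lower bound is a composed zigzag with $n_l+1$ monotone pieces per layer, obtained precisely by flipping the direction of one ReLU so that no region has a zero-dimensional image. The one step you defer---exhibiting explicit weights, biases, and breakpoints realizing the $(n+1)$-tooth zigzag---is exactly what the paper carries out via its triangular linear system $(R_1),\ldots,(R_{n+1})$ with the compatibility condition $\tfrac{1}{t_1}=\tfrac{1}{t_3-t_2}+\tfrac{1}{t_4-t_3}$ and the choice $t_i=\tfrac{2i-1}{2n+1}$, and your identification of why $n_l\geq 3$ is needed (the sign-alternation obstruction at $n_l=2$) matches the role that hypothesis plays there.
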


The expression above is a simplified form of the upper bound from Theorem~\ref{thm:upper_bound_improved} in the case $n_0 = 1$.

The proof of this theorem in Appendix~\ref{sec:proof_dim_one} has a construction with $n+1$ regions that replicate themselves as we add layers, instead of $n$ as in~\citet{Montufar2014}. That is motivated by an insight from the previous section: in order to obtain more regions, we want the dimension of the image of every region to be as large as possible. When $n_0 = 1$, we want all regions to have images with dimension one. This intuition leads to a new construction with one additional region that can be replicated with other strategies.

\subsection{Lower Bounds on the Maximal Number of Linear Regions}\label{sec:lower_bound_rectifier}

Both the lower bounds from~\citet{Montufar2014} and from~\citet{Arora2018} can be slightly improved, since their approaches are based on extending a 1-dimensional construction similar to the one in Section~\ref{sec:1D_analysis}. We do both since they are not directly comparable: the former bound is in terms of the number of neurons in each layer and the latter is in terms of the total size of the network.

\begin{thm}\label{thm:lower_bound}
The maximal number of linear regions induced by a rectifier network with $n_0$ input units and $L$ hidden layers with $n_l \geq 3n_0$ for all $l$ is lower bounded by $\big(\prod_{l=1}^{L-1}\big(\big\lfloor \frac{n_l}{n_0} \big\rfloor + 1\big)^{n_0}\big) \sum_{j=0}^{n_0} \binom{n_L}{j}$.
\end{thm}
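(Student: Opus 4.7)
The plan is to adapt the hyperplane-arrangement construction of Montufar et al.~(2014) by plugging in the sharper one-dimensional sawtooth of Theorem~\ref{thm:dimension_one} in place of their classical $n$-region sawtooth. Concretely, for each hidden layer $l \in \{1, \ldots, L-1\}$ I would partition the $n_l$ neurons into $n_0$ disjoint groups of size at least $\lfloor n_l / n_0 \rfloor \geq 3$ (using the hypothesis $n_l \geq 3n_0$), and make the weight matrix $W^l$ block-diagonal so that the $k$-th group acts only on the $k$-th coordinate of the incoming layer. Each such group, by the construction behind Theorem~\ref{thm:dimension_one}, implements a one-variable PWL ``tent'' function producing $\lfloor n_l/n_0 \rfloor + 1$ linear pieces, and critically, each piece maps the input interval surjectively onto the same interval on which the next layer's groups are designed to act.

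Second, I would argue by induction on $l$ that after $l$ hidden layers the input cube decomposes into $\prod_{l'=1}^{l}(\lfloor n_{l'}/n_0\rfloor+1)^{n_0}$ axis-aligned sub-boxes, each of which the network maps onto the full unit cube of dimension $n_0$. The block-diagonal structure makes the coordinates evolve independently, so the count across groups multiplies, yielding $(\lfloor n_l/n_0\rfloor+1)^{n_0}$ new pieces inside every existing region; the surjectivity of each sawtooth piece onto the next layer's operating interval ensures that the construction can be iterated without loss. Crucially, the image of each region has full dimension $n_0$, which is exactly the condition needed so that the final layer is not crippled by the bottleneck/dimension-drop phenomenon analyzed via Lemma~\ref{lem:dim_bound}.

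Third, for the output layer I would choose the $n_L$ rows of $W^{L+1}$ (composed with the affine map identifying each box with $\mathbb{R}^{n_0}$) so that the corresponding hyperplanes lie in general position inside each of the $\prod_{l=1}^{L-1}(\lfloor n_l/n_0\rfloor+1)^{n_0}$ regions simultaneously. Because each region's image is a full-dimensional cube and the last-layer bias is a free parameter, Zaslavsky's theorem (the base case of Lemma~\ref{lem:regions_rank}) yields $\sum_{j=0}^{n_0}\binom{n_L}{j}$ sub-regions inside each, and multiplying gives the stated bound.

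The main obstacle, I expect, is rigorously verifying the inductive step of the second paragraph: one has to check that the ``$+1$'' improvement from the dimension-one construction is not destroyed when several such sawtooths are composed in parallel across coordinates and across layers, i.e.\ that the extra piece in each group genuinely survives the ReLU nonlinearity of the subsequent layer and still maps onto the working interval. This is exactly the property that fails in the naive adaptation of Montufar et al.'s original scheme (which is why their bound only has $\lfloor n_l/n_0\rfloor$), so the careful choice of biases and slopes in the Theorem~\ref{thm:dimension_one} construction, together with the $n_l \geq 3n_0$ assumption guaranteeing at least three neurons per group, is what must be leveraged to push the count from $\lfloor n_l/n_0\rfloor$ up to $\lfloor n_l/n_0\rfloor+1$ per coordinate per layer.
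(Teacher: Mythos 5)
Your proposal is correct and follows essentially the same route as the paper's proof: split the network into $n_0$ independent one-dimensional subnetworks (your block-diagonal weight matrices), apply the improved sawtooth of Theorem~\ref{thm:dimension_one} to get $\lfloor n_l/n_0\rfloor+1$ pieces per group per layer, multiply across the independent coordinates, and finish with $n_L$ hyperplanes in general position partitioning each full-dimensional image into $\sum_{j=0}^{n_0}\binom{n_L}{j}$ regions. The ``main obstacle'' you identify --- that the extra piece survives composition --- is exactly what the zigzag/surjectivity property established in the proof of Theorem~\ref{thm:dimension_one} guarantees, so the paper simply invokes that theorem as a black box rather than re-verifying it here.
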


The proof of this theorem is in Appendix~\ref{sec:proof_lower_bound}. For comparison, the differences between the lower bound theorem (Theorem 5) from \citet{Montufar2014} and the above theorem is the replacement of the condition $n_l \geq n_0$ by the more restrictive $n_l \geq 3n_0$, and of $\left\lfloor n_l/n_0 \right\rfloor$ by $\left\lfloor n_l/n_0 \right\rfloor + 1$.

\begin{thm}\label{thm:lower_bound_arora}
For any $m \geq 1$ and $w \geq 2$, there exists a rectifier network with $n_0$ input units and $L$ hidden layers of size $2m + w(L-1)$ that has $2\sum_{j=0}^{n_0-1}\binom{m-1}{j}(w+1)^{L-1}$ linear regions.
\end{thm}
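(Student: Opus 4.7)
The plan is to mirror the construction of \citet{Arora2018}, but replace the standard one-dimensional sawtooth in the tail layers by the sharper $n+1$ replicating construction from Section~\ref{sec:1D_analysis}. In Arora et al.'s approach, the first hidden layer of width $2m$ uses a carefully chosen weight matrix to implement a map from $\mathbb{R}^{n_0}$ whose image is essentially one-dimensional along the direction that matters for counting, and along that direction the region count of the first layer is exactly $2\sum_{j=0}^{n_0-1}\binom{m-1}{j}$. Each subsequent hidden layer of width $w$ is then a 1D folding gadget that multiplies the region count by $w$. The improvement from $w^{L-1}$ to $(w+1)^{L-1}$ comes from swapping their $w$-fold sawtooth for the tighter gadget used in the proof of Theorem~\ref{thm:dimension_one}, which uses $w$ ReLUs to fold an interval into $w+1$ pieces while keeping the image one-dimensional and mapping each piece onto the same target interval.

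The concrete steps, in order, are: (i) Adopt verbatim the first-layer construction of \citet{Arora2018} so that after layer 1 we already have $2\sum_{j=0}^{n_0-1}\binom{m-1}{j}$ linear regions, and identify the one-dimensional interval $I \subseteq \mathbb{R}$ onto which the relevant output coordinate sweeps over every region. (ii) For layers $l = 2, \ldots, L$, insert the Section~\ref{sec:1D_analysis} gadget, after a linear rescaling of its input so that the gadget's internal fold interval matches $I$, and so that its output also lies in $I$; this guarantees that the gadget can be cascaded. (iii) Prove by induction on $l$ that after layer $l$ there are $2\sum_{j=0}^{n_0-1}\binom{m-1}{j}(w+1)^{l-1}$ regions, using the fact that each of the existing regions is mapped surjectively onto $I$ and that the gadget splits $I$ into $w+1$ subintervals on which the composition is affine and nonconstant, so distinct new activation patterns arise on each subinterval of each existing region.

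The main obstacle is the compatibility between Arora et al.'s first layer and the Theorem~\ref{thm:dimension_one} gadget. Arora's construction produces a specific shape of 1D signal (its amplitude, offset, and set of breakpoints are fixed by the first-layer weights), while the replicating gadget was designed assuming a clean input interval that gets folded $w+1$ times; one has to choose the rescaling, bias shift, and the gadget's thresholds so that the interval $I$ is both the domain and the image of the composed map, which is what licenses cascading and hence the clean $(w+1)^{L-1}$ factor. A secondary issue is the condition $w \geq 2$: the Section~\ref{sec:1D_analysis} construction was stated for $w \geq 3$, so for $w = 2$ one must either verify that a small modification of the gadget still produces $w + 1 = 3$ folds, or handle $w = 2$ as a separate case using a bespoke 2-neuron gadget that folds $I$ into three pieces. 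Once these compatibility and boundary details are pinned down, the induction and the resulting region count follow directly from counting activation patterns layer by layer.
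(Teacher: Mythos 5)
Your proposal matches the paper's proof essentially verbatim: the paper likewise takes the first layer from Theorem 6.1 and Lemma 6.2 of Arora et al.\ (2018), which yields $2\sum_{j=0}^{n_0-1}\binom{m-1}{j}$ regions with a one-dimensional output, and then appends $L-1$ copies of the width-$w$ one-dimensional gadget from Theorem~\ref{thm:dimension_one}, each multiplying the region count by $w+1$. Your caveat about $w=2$ is well taken---the paper's proof silently invokes Theorem~\ref{thm:dimension_one}, which is stated only for width at least $3$, so the $w=2$ case genuinely requires the separate verification you describe (and it is not obvious that a two-ReLU one-dimensional layer can produce three full-range pieces with alternating slopes).
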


The proof of this theorem is in Appendix~\ref{sec:proof_lower_bound_arora}. The differences between Theorem 2.11(i) from \citet{Arora2018} and the above theorem is the replacement of $w$ by $w+1$. They construct a $2m$-width layer with many regions and use a one-dimensional construction for the remaining layers.

\section{An Upper Bound on the Number of Linear Regions for Maxout Networks} \label{sec:bound_maxouts}

We now consider a deep neural network composed of maxout units. Given weights $W^l_j$ for $j = 1, \ldots, k$, the output of a rank-$k$ maxout layer $l$ is given by
\begin{eqnarray*}
\mathbf{h}^l & = &  \max\{W^l_1 \mathbf{h}^{l-1} + b^l_1, \ldots, W^l_k \mathbf{h}^{l-1} + b^l_k\}
\end{eqnarray*}

In terms of bounding number of regions, a major difference between the next result for maxout units and the previous one for ReLUs is that dimensionality plays less of a role, since neurons may no longer have an inactive state with zero output. Nevertheless, using techniques similar to the ones from Section~\ref{sec:upper_bound_rectifier}, the following theorem can be shown (see Appendix~\ref{sec:proof_upper_bound_maxout} for the proof).

\begin{thm}\label{thm:upper_bound_maxout}
Consider a deep neural network with $L$ layers, $n_l$ rank-$k$ maxout units at each layer $l$, and an input of dimension $n_0$. The maximal number of regions 
is at most
\[
\prod_{l=1}^L \sum_{j=0}^{d_l} \binom{\frac{k(k-1)}{2} n_l}{j}
\]
where $d_l = \min\{n_0, n_1, \ldots, n_l\}$.

Asymptotically, if $n_l = n$ for all $l = 1, \ldots, L$, $n \geq n_0$, and $n_0 = O(1)$, then the maximal number of regions is at most $O((k^2n)^{Ln_0})$.
\end{thm}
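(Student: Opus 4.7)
The plan is to adapt the layerwise counting framework used in the proof of Theorem~\ref{thm:upper_bound_improved}, with two changes appropriate to the maxout setting. First, I would replace the hyperplane arrangement induced by the ReLU on/off boundaries by the arrangement induced by the maxout piece-selection boundaries: within a fixed region $\mathcal{S}$ at layer $l-1$, the maxout unit $i$ at layer $l$ splits the input into the subregions where $W^l_{i,p} \mathbf{h}^{l-1} + b^l_{i,p}$ achieves the max for each $p \in \{1,\ldots,k\}$, and these subregions are carved out by the $\binom{k}{2}$ hyperplanes $(W^l_{i,p}-W^l_{i,q})\mathbf{h}^{l-1} + (b^l_{i,p}-b^l_{i,q}) = 0$ for $p<q$. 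Summing across the $n_l$ units gives $\tfrac{k(k-1)}{2} n_l$ hyperplanes per layer, which explains the binomial coefficient in the statement.

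Second, I would track the dimension of the image of a region, in exactly the way Lemmas~\ref{lem:regions_rank} and~\ref{lem:regions_rank_network} do for the rectifier case. On any fixed region $\mathcal{S}$ at layer $l-1$, the map $\mathbf{h}^{l-1}$ restricted to $\mathcal{S}$ is affine, so it has the form $A^{l-1}_{\mathcal{S}}\mathbf{x}+c$ for some matrix $A^{l-1}_{\mathcal{S}}$. Substituting into the maxout boundary hyperplanes above, the effective coefficient matrix seen in input space is a product whose rank is bounded by $\min\{n_l, \operatorname{rank}(A^{l-1}_{\mathcal{S}})\}$. By induction on $l$, and using that matrix rank cannot exceed the smaller of the two factor dimensions, $\operatorname{rank}(A^{l}_{\mathcal{S}}) \leq d_l = \min\{n_0, n_1, \ldots, n_l\}$ for every region $\mathcal{S}$ at layer $l$. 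Applying Lemma~\ref{lem:regions_rank} to the $\tfrac{k(k-1)}{2} n_l$ hyperplanes of rank at most $d_{l-1}$ shows that each region at layer $l-1$ splits into at most $\sum_{j=0}^{d_l} \binom{\tfrac{k(k-1)}{2}n_l}{j}$ subregions at layer $l$.

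From here the argument is purely multiplicative: since the bound on the splitting factor at layer $l$ depends on the region only through $d_l$, and $d_l$ is determined by the network widths and not by the particular activation pattern, telescoping over $l=1,\ldots,L$ gives the stated product. This is precisely where the maxout bound is simpler than the ReLU bound of Theorem~\ref{thm:upper_bound_improved}: in the ReLU case the effective rank $n_l - j_l$ of $\sigma_{S^l}(W^l)$ varies with the activation pattern, so one must keep the joint index set $J$, whereas here every maxout unit contributes to the rank regardless of which piece is selected, and the rank bound collapses to a single quantity $d_l$ that depends only on the architecture. The asymptotic bound $O((k^2 n)^{Ln_0})$ when $n_l = n \geq n_0 = O(1)$ then follows from $d_l = n_0$ and the standard estimate $\sum_{j=0}^{n_0}\binom{\tfrac{k(k-1)}{2}n}{j} = O((k^2 n)^{n_0})$.

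The main obstacle I anticipate is a clean verification that the hyperplanes defining the piece selection of a maxout unit are, after substitution into input coordinates, literally $\tfrac{k(k-1)}{2}n_l$ affine hyperplanes to which Lemma~\ref{lem:regions_rank} applies; one must check that the natural ``region of a region'' interpretation still coincides with a single arrangement in the input space, and that repeated pairwise boundaries are counted consistently (they only strengthen the bound). Beyond that, the argument is a direct adaptation of the rank-and-count template from Section~\ref{sec:upper_bound_rectifier}, replacing $n_l$ hyperplanes per layer by $\tfrac{k(k-1)}{2}n_l$ and simplifying the rank recurrence because no neurons are zeroed out.
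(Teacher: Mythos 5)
Your proposal follows essentially the same route as the paper's proof: replace each rank-$k$ maxout unit by the $\binom{k}{2}$ pairwise-difference hyperplanes, bound the rank of the composed weight matrix in input space by $d_l=\min\{n_0,\ldots,n_l\}$ via the product-of-matrices argument, apply Lemma~\ref{lem:regions_rank} to get the per-layer factor $\sum_{j=0}^{d_l}\binom{\frac{k(k-1)}{2}n_l}{j}$, and multiply over layers since the factor depends only on the architecture. The one point you flag as an obstacle --- that extending the piecewise boundary segments of a maxout unit to full hyperplanes only overcounts regions --- is exactly how the paper handles it, citing the argument of Mont\'ufar et al.\ (2014).
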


\section{Exact Counting of Linear Regions}\label{sec:counting}

If the input space $\mathbf{x} \in \mathbb{R}^{n_0}$ is bounded by minimum and maximum values along each dimension, or else if $\mathbf{x}$ corresponds to a polytope more generally, then we can define a mixed-integer linear formulation mapping polyhedral regions of $\mathbf{x}$ to the output space $\mathbf{y}$. 
The assumption that $\mathbf{x}$ is bounded and polyhedral 
is natural in most applications, where each value $x_i$ has known lower and upper bounds (e.g., the value can vary from 0 to 1 for image pixels). Among other things, we can use this formulation to count the number of linear regions. 
 
In the formulation that follows, we use continuous variables to represent the input $\mathbf{x}$, which we can also denote as $\mathbf{h}^0$, the output of each neuron $i$ in layer $l$ as $h^l_i$, and the output $\mathbf{y}$ as $\mathbf{h}^{L+1}$. To simplify the representation, we lift this formulation to a space that also contains the output of a complementary set of neurons, each of which is active when the corresponding neuron is not. Namely, for each neuron $i$ in layer $l$ we also have a variable $\overline{h}^l_i := \max \{0, - W^l_i h^{l-1} - b_i^l\}$. We use binary variables of the form $z^l_i$ to denote if each neuron $i$ in layer $l$ is active or else if the complement 
is. Finally, we assume $M$ to be a sufficiently large constant. 

 For a given neuron $i$ in layer $l$, the following set of constraints maps the input to the output: 
\begin{align}
W^l_i h^{l-1} + b^l_i = h^l_i - \overline{h}^l_i \label{cons:hyp} \\ 
h^l_i \leq M z^l_i \label{cons:hu} \\
\overline{h}^l_i \leq M (1-z^l_i) \label{cons:su} \\
h^l_i \geq 0 \label{cons:hp} \\
\overline{h}^l_i \geq 0 \label{cons:sp} \\
z^l_i \in \{0, 1\} \label{cons:zb}
\end{align}

\begin{thm}\label{thm:mir}
Provided that $| W_i^l h^{l-1} + b_i^l | \leq M$ for any possible value of $h^{l-1}$, a formulation with the set of constraints \eqref{cons:hyp}--\eqref{cons:zb} for each neuron of a rectifier network is such that a feasible solution with a fixed value for $x$ yields the output $y$ of the neural network.
\end{thm}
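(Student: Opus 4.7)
The plan is to reduce the correctness of the whole formulation to a per-neuron claim and then propagate it through the layers by induction. For a single neuron, I would show that, conditional on any fixed value of $h^{l-1}$, the feasible $(h^l_i, \overline{h}^l_i, z^l_i)$ are exactly those with $h^l_i = \max\{0, W^l_i h^{l-1} + b^l_i\}$ and $\overline{h}^l_i = \max\{0, -(W^l_i h^{l-1} + b^l_i)\}$. Given that single-neuron claim, a straightforward induction on $l$ takes care of the network: once $h^{l-1}$ is forced to equal the activations the DNN produces, the claim forces $h^l$ to equal the next layer's activations, and finally the output layer is linear so $h^{L+1} = y$.

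For the per-neuron step I would perform a case split on $z^l_i \in \{0, 1\}$. If $z^l_i = 1$, constraint \eqref{cons:su} gives $\overline{h}^l_i \le 0$, combined with \eqref{cons:sp} this forces $\overline{h}^l_i = 0$, so \eqref{cons:hyp} gives $h^l_i = W^l_i h^{l-1} + b^l_i$, and \eqref{cons:hp} then requires $W^l_i h^{l-1} + b^l_i \ge 0$; the big-$M$ hypothesis ensures \eqref{cons:hu} is automatically satisfied in this case. If $z^l_i = 0$, constraint \eqref{cons:hu} together with \eqref{cons:hp} symmetrically forces $h^l_i = 0$ and $\overline{h}^l_i = -(W^l_i h^{l-1} + b^l_i) \ge 0$, so $W^l_i h^{l-1} + b^l_i \le 0$. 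In both cases the pair $(h^l_i, \overline{h}^l_i)$ is exactly $(\max\{0, W^l_i h^{l-1} + b^l_i\}, \max\{0, -(W^l_i h^{l-1} + b^l_i)\})$, which is the ReLU output and its complement. Conversely, any preactivation value admits the appropriate choice of $z^l_i$, so the formulation is neither too loose nor too tight (with the only ambiguity being the harmless choice of $z^l_i$ when the preactivation is exactly zero).

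With the per-neuron equivalence in hand, the induction is routine: the base case $l=0$ is just $h^0 = x$, which is fixed by hypothesis; the inductive step applies the per-neuron claim to each $i \in \{1, \ldots, n_l\}$ in parallel, using the already-fixed value of $h^{l-1}$; and the linear output layer is handled by a single application of the defining equation $y = W^{L+1} h^L$ (no ReLU constraints). This shows that any feasible solution with a fixed $x$ has $h^l$ equal to the DNN's layer-$l$ activations and $y$ equal to the DNN's output, which is exactly the claim.

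The only real subtlety, and the place where the hypothesis $|W^l_i h^{l-1} + b^l_i| \le M$ is essential, is ruling out the degenerate scenario in which the big-$M$ constraints \eqref{cons:hu}--\eqref{cons:su} cut off the legitimate ReLU value; without a sufficiently large $M$ the formulation could become infeasible even when a valid DNN evaluation exists, or (if $M$ were too small on one side) could force $z^l_i$ to the wrong branch. I would therefore state the $M$ condition explicitly at each use and note that, since $x$ lies in a bounded polytope and each layer is a continuous piecewise linear map of a compact set, such an $M$ always exists in principle; in practice one computes layer-wise preactivation bounds (e.g.\ by interval arithmetic) to instantiate it. This is the main obstacle worth flagging, but it is an assumption of the theorem rather than something that needs to be proved.
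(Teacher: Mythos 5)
Your proposal is correct and follows essentially the same route as the paper: reduce to a per-neuron claim and verify that constraints \eqref{cons:hyp}--\eqref{cons:zb} force $(h^l_i,\overline{h}^l_i)$ to be the ReLU output and its complement. The only cosmetic difference is that you case-split on $z^l_i\in\{0,1\}$ while the paper splits on the sign of the preactivation $W_i^l h^{l-1}+b_i^l$; both yield the same conclusion, including the harmless ambiguity of $z^l_i$ when the preactivation is exactly zero.
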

\begin{proof}
It suffices to prove that the constraints for each neuron map the input to the output in the same way that the neural network would. 
If $W_i^l \mathbf{h}^{l-1} + b_i^l > 0$, it follows that $h_i^l - \overline{h}_i^l > 0$ according to \eqref{cons:hyp}. Since both variables are non-negative due to \eqref{cons:hp} and \eqref{cons:sp} whereas one is non-positive due to \eqref{cons:hu}, \eqref{cons:su}, and \eqref{cons:zb}, 
then $z_i^l = 1$ and $h_i^l = \max\left\{0, W_i^l \mathbf{h}^{l-1} + b_i^l \right\}$. 
If $W_i^l \mathbf{h}^{l-1} + b_i^l < 0$, 
then it similarly follows that $h_i^l - \overline{h}_i^l < 0$, $z_i^l = 0$, 
and thus $\overline{h}_i^l = \min\left\{0, W_i^l \mathbf{h}^{l-1} + b_i^l \right\}$.  If $W_i^l \mathbf{h}^{l-1} + b_i^l = 0$, then either $h_i^l = 0$ or $\overline{h}_i^l = 0$ due to constraints \eqref{cons:hp} to \eqref{cons:zb} whereas \eqref{cons:hyp} implies that $\overline{h}_i^l = 0$ or $h_i^l = 0$, respectively. In this case, the value of $z_i^l$ is arbitrary 
but irrelevant.
\end{proof}

A systematic method to count integer solutions is the one-tree approach~\citep{Danna1}, 
 which resumes the search after an optimal solution has been found using the same branch-and-bound tree. This method also allows for counting feasible solutions within a threshold of the optimal value. Note that in constraints \eqref{cons:hyp}--\eqref{cons:zb}, the variables $z^l_i$ can be either 0 or 1 when they lie on the activation boundary, whereas we want to consider a neuron active only when its output is strictly positive. This discrepancy may cause double-counting when activation boundaries overlap.
We can address that by defining an objective function that maximizes the minimum output $f$ of an active neuron, which is positive in the non-degenerate cases that we want to count. 
We state this formulation for rectifier networks as follows:
\begin{align*}
\max ~ ~ & f \nonumber \\
\text{s.t.} ~ ~ & \text{\eqref{cons:hyp}--\eqref{cons:zb}} & \forall \text{ neuron $i$ in layer $l$} \tag{$\mathcal{P}$}\label{mip} \\
& f \leq h^l_i + (1-z_i^l) M & \forall \text{ neuron $i$ in layer $l$} \nonumber \\
& x \in X \nonumber
\end{align*}

A discusssion on the choice for the $M$ constants can be found in Appendix~\ref{sec:mir_impl}.
 In addition, we discuss the mixed-representability of DNNs in Appendix~\ref{sec:mir}, the theory for unrestricted inputs in Appendix~\ref{sec:unrestricted_inputs}, and a mixed-integer formulation for maxout networks in Appendix~\ref{sec:mir_maxouts}, respectively.


\section{Experiments}\label{sec:exp}

We perform an experiment to count linear regions of small-sized networks with ReLU activation units on the MNIST benchmark dataset~\citep{LeCun1998}. In this experiment, we train rectifier networks with two hidden layers summing up to 22 neurons.  We train 10 networks for each configuration for 20 epochs or training steps, and we count all linear regions within $0 \leq \mathbf{x} \leq 1$. 
The counting code is written in C++ (gcc 4.8.4) using CPLEX Studio 12.8 as a solver and ran in Ubuntu 14.04.4 on a machine with 40 Intel(R) Xeon(R)
CPU E5-2640 v4 @ 2.40GHz processors and 132 GB of RAM. 
The runtimes for counting different configuration can be found in Appendix~\ref{sec:runtimes}.

 Figure~\ref{fig:bounds_counting} shows the average, minimum, and maximum number of linear regions for each configuration of these networks. The plot also contains the corresponding upper bounds for each configuration from Theorem~\ref{thm:upper_bound_improved} and those from \citet{Montufar2014} and \citet{Montufar2017}, 
which are the first and the tightest bounds from prior work respectively. Note that upper bound from Theorem~\ref{thm:upper_bound_improved} is tighter. 

Figure~\ref{fig:all_error} shows how the number of regions for each trained DNN compares with Cross-Entropy~(CE) error on training set, and Misclassification Rate~(MR) on testing set. If an intermediate layer in a network has only 1 or 2 neurons, it is natural to expect very high training and test errors.
We discard such DNNs with a layer of small width in Figure~\ref{fig:wider_error}.

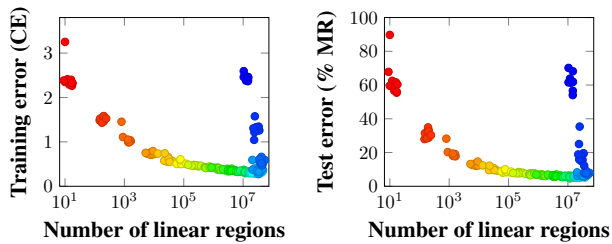
\begin{figure}
  \centering
\begin{tikzpicture}[scale=0.71]
\begin{semilogxaxis}[filter discard warning=false, enlargelimits=false, enlarge x limits=0.02, 
title style={at={(0.5,-0.25)},anchor=north,yshift=-0.1},
xlabel={\large \textbf{Number of linear regions}},
ylabel={\large \textbf{Training error (CE)}},ymin=0.0,ymax=3.8, 
every axis y label/.style={at={(ticklabel cs:0.5)},rotate=90,anchor=near ticklabel}, 
legend style={at={(0.55,0.35)},anchor=north},
width=5.5cm,
colormap={my colormap}{
                color=(green)
                color=(red)
                color=(blue)
                color=(yellow)
            }
]
\addplot[
        scatter,%
        scatter/@pre marker code/.code={%
            \edef\temp{\noexpand\definecolor{mapped color}{rgb}{\pgfplotspointmeta}}%
            \temp
            \scope[draw=mapped color!80!black,fill=mapped color]%
        },%
        scatter/@post marker code/.code={%
            \endscope
        },%
        only marks,     
        mark=*,
        point meta={TeX code symbolic={%
            \edef\pgfplotspointmeta{\thisrow{RED},\thisrow{GREEN},\thisrow{BLUE}}%
        }},
    ] 
 table {plot_experiments_A-training.txt} ;
\end{semilogxaxis}
\end{tikzpicture}
~ 
\begin{tikzpicture}[scale=0.71]
\begin{semilogxaxis}[filter discard warning=false, enlargelimits=false, enlarge x limits=0.02, 
title style={at={(0.5,-0.25)},anchor=north,yshift=-0.1},
xlabel={\large \textbf{Number of linear regions}},
ylabel={\large \textbf{Test error (\% MR)}},ymin=0.0,ymax=100.0, 
every axis y label/.style={at={(ticklabel cs:0.5)},rotate=90,anchor=near ticklabel}, 
legend style={at={(0.55,0.35)},anchor=north},
width=5.5cm,
colormap={my colormap}{
                color=(green)
                color=(red)
                color=(blue)
                color=(yellow)
            }
]
\addplot[
        scatter,%
        scatter/@pre marker code/.code={%
            \edef\temp{\noexpand\definecolor{mapped color}{rgb}{\pgfplotspointmeta}}%
            \temp
            \scope[draw=mapped color!80!black,fill=mapped color]%
        },%
        scatter/@post marker code/.code={%
            \endscope
        },%
        only marks,     
        mark=*,
        point meta={TeX code symbolic={%
            \edef\pgfplotspointmeta{\thisrow{RED},\thisrow{GREEN},\thisrow{BLUE}}%
        }},
    ] 
table {plot_experiments_B-test.txt} ;
\end{semilogxaxis}
\end{tikzpicture}
\caption{\emph{Number of linear regions versus cross-entropy error on the training set, and misclassification rate on the testing set. The colors represent the number of neurons in each of the first two layers. The color gradient from red to blue corresponds to increasing the size of the first layer and decreasing the size of the second.}}
\label{fig:all_error}
\end{figure}

\begin{figure}
  \centering
\begin{tikzpicture}[scale=0.71]
\begin{semilogxaxis}[filter discard warning=false, enlargelimits=false, enlarge x limits=0.02, 
title style={at={(0.5,-0.25)},anchor=north,yshift=-0.1},
xlabel={\large \textbf{Number of linear regions}},
ylabel={\large \textbf{Training error (CE)}},ymin=0.2,ymax=0.9, 
every axis y label/.style={at={(ticklabel cs:0.5)},rotate=90,anchor=near ticklabel}, 
legend style={at={(0.55,0.35)},anchor=north},
width=5.5cm,
colormap={my colormap}{
                color=(green)
                color=(red)
                color=(blue)
                color=(yellow)
            }
]
\addplot[
        scatter,%
        scatter/@pre marker code/.code={%
            \edef\temp{\noexpand\definecolor{mapped color}{rgb}{\pgfplotspointmeta}}%
            \temp
            \scope[draw=mapped color!80!black,fill=mapped color]%
        },%
        scatter/@post marker code/.code={%
            \endscope
        },%
        only marks,     
        mark=*,
        point meta={TeX code symbolic={%
            \edef\pgfplotspointmeta{\thisrow{RED},\thisrow{GREEN},\thisrow{BLUE}}%
        }},
    ] 
 table {plot_experiments_A-training-R.txt} ;
\end{semilogxaxis}
\end{tikzpicture}
~ 
\begin{tikzpicture}[scale=0.71]
\begin{semilogxaxis}[filter discard warning=false, enlargelimits=false, enlarge x limits=0.02, 
title style={at={(0.5,-0.25)},anchor=north,yshift=-0.1},
xlabel={\large \textbf{Number of linear regions}},
ylabel={\large \textbf{Test error (\%MR)}},ymin=4.0,ymax=15.0, 
every axis y label/.style={at={(ticklabel cs:0.5)},rotate=90,anchor=near ticklabel}, 
legend style={at={(0.55,0.35)},anchor=north},
width=5.5cm,
colormap={my colormap}{
                color=(green)
                color=(red)
                color=(blue)
                color=(yellow)
            }
]
\addplot[
        scatter,%
        scatter/@pre marker code/.code={%
            \edef\temp{\noexpand\definecolor{mapped color}{rgb}{\pgfplotspointmeta}}%
            \temp
            \scope[draw=mapped color!80!black,fill=mapped color]%
        },%
        scatter/@post marker code/.code={%
            \endscope
        },%
        only marks,     
        mark=*,
        point meta={TeX code symbolic={%
            \edef\pgfplotspointmeta{\thisrow{RED},\thisrow{GREEN},\thisrow{BLUE}}%
        }},
    ] 
table {plot_experiments_B-test-R.txt} ;
\end{semilogxaxis}
\end{tikzpicture}
\caption{\emph{Scatter plot for the number of linear regions versus training and testing errors on DNNs with minimum width at least 4. The colors follow the same convention used in Figure~\ref{fig:all_error}.}}
\label{fig:wider_error}
\end{figure}
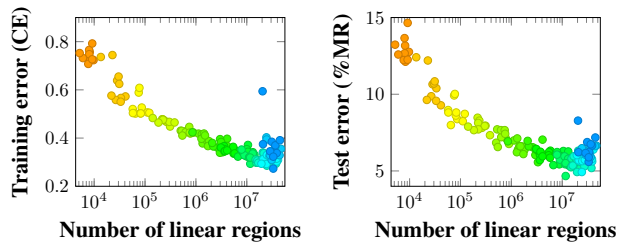
\section{Discussion}

Our ReLU upper bound indicates that small widths in early layers cause a bottleneck effect on the number of regions. If we reduce the width of an early layer, the dimension of the image of the linear regions become irrecoverably smaller throughout the network and the regions will not be able to be partitioned as much. This intuition allows us to develop a 1-dimensional construction with the maximal number of regions by eliminating a zero-dimensional bottleneck. In our experiment, we validate the bottleneck effect by observing that the actual number of linear regions is asymmetric when one layer is increased and another decreased in size.

An unexpected consequence of one of our results is that shallow networks can attain more linear regions when the input dimensions exceed the number of neurons. 
This complements prior work, which has not considered large input dimensions, a common case in practice.

We also observe in Figure~\ref{fig:insightsB} that the depth that maximizes the upper bound from Theorem~\ref{thm:upper_bound_improved} increases with the number of units and decreases with the size of the input. It would be interesting to investigate if this also happens with respect to the actual number of regions, in which case the depth could be chosen according to those parameters.

However, it would be possible that DNNs configurations with large number of linear regions do not generalize
well if there are so many regions that each training point can be singled out in a different region, in particular if regions with similar labels are unlikely to be compositionally related.

Nevertheless, we have initial evidence that training and classification accuracies relate to the number of regions for configurations where no layer is too small, hence suggesting that the number of regions can be a metric for comparing similar DNN configurations. 
However, in the cases where one layer is too small, we have also observed that the number of regions can be large and do not reflect the capacity of the DNN. We hypothesize that the presence of low dimensionality negatively affects the accuracy of the network, 
even when the number of regions is large because of other layers. 
This indicates that potentially more insights could be gained from investigating the shape of linear regions.

\section*{Acknowledgements}
We thank the anonymous reviewers for useful suggestions. 

\bibliography{icml2018_conference}

\begin{thebibliography}{30}
\providecommand{\natexlab}[1]{#1}
\providecommand{\url}[1]{\texttt{#1}}
\expandafter\ifx\csname urlstyle\endcsname\relax
  \providecommand{\doi}[1]{doi: #1}\else
  \providecommand{\doi}{doi: \begingroup \urlstyle{rm}\Url}\fi

\bibitem[Anthony \& Bartlett(1999)Anthony and Bartlett]{Anthony1999}
Anthony, M. and Bartlett, P.
\newblock Neural network learning: Theoretical foundations.
\newblock 1999.

\bibitem[Arora et~al.(2018)Arora, Basu, Mianjy, and Mukherjee]{Arora2018}
Arora, R., Basu, A., Mianjy, P., and Mukherjee, A.
\newblock Understanding deep neural networks with rectified linear units.
\newblock In \emph{ICLR}, 2018.

\bibitem[Balas(1979)]{DP}
Balas, E.
\newblock Disjunctive programming.
\newblock \emph{Annals of Discrete Mathematics}, \penalty0 (5):\penalty0 3--51,
  1979.

\bibitem[Balas et~al.(1993)Balas, Ceria, and Cornu\'ejols]{CGLP}
Balas, E., Ceria, S., and Cornu\'ejols, G.
\newblock A lift-and-project cutting plane algorithm for mixed 0--1 programs.
\newblock \emph{Mathematical Programming}, 58:\penalty0 295--324, 1993.

\bibitem[Bianchini \& Scarselli(2014)Bianchini and Scarselli]{Bianchini2014}
Bianchini, M. and Scarselli, F.
\newblock On the complexity of neural network classifiers: A comparison between
  shallow and deep architectures.
\newblock \emph{IEEE Transactions on Neural Networks and Learning Systems},
  2014.

\bibitem[Camm et~al.(1990)Camm, Raturi, and Tsubakitani]{BigM}
Camm, J.~D., Raturi, A.~S., and Tsubakitani, S.
\newblock Cutting big {M} down to size.
\newblock \emph{Interfaces}, 20\penalty0 (5):\penalty0 61--66, 1990.

\bibitem[Cheng et~al.(2017)Cheng, N{\"u}hrenberg, and Ruess]{Cheng2017}
Cheng, C.-H., N{\"u}hrenberg, G., and Ruess, H.
\newblock Maximum resilience of artificial neural networks.
\newblock In \emph{ATVA}, pp.\  251--268, 2017.

\bibitem[Ciresan et~al.(2012)Ciresan, Meier, Masci, and
  Schmidhuber]{Ciresan2012}
Ciresan, D., Meier, U., Masci, J., and Schmidhuber, J.
\newblock Multi column deep neural network for traffic sign classification.
\newblock \emph{Neural Networks}, 32:\penalty0 333--338, 2012.

\bibitem[Cybenko(1989)]{Cybenko1989}
Cybenko, G.
\newblock Approximation by superpositions of a sigmoidal function.
\newblock \emph{Mathematics of Control, Signals and Systems}, 2\penalty0
  (4):\penalty0 303--314, 1989.

\bibitem[Danna et~al.(2007)Danna, Fenelon, Gu, and Wunderling]{Danna1}
Danna, E., Fenelon, M., Gu, Z., and Wunderling, R.
\newblock Generating multiple solutions for mixed integer programming problems.
\newblock In \emph{IPCO}, pp.\  280--294. 2007.

\bibitem[Delalleau \& Bengio(2011)Delalleau and Bengio]{Delalleau2011}
Delalleau, O. and Bengio, Y.
\newblock Shallow vs. deep sum-product networks.
\newblock In \emph{NIPS}, 2011.

\bibitem[Eldan \& Shamir(2016)Eldan and Shamir]{Eldan2015}
Eldan, R. and Shamir, O.
\newblock The power of depth for feedforward neural networks.
\newblock In \emph{Conference on Learning Theory}, pp.\  907--940, 2016.

\bibitem[Fourier(1826)]{Fourier}
Fourier, J.
\newblock Solution d’une question particuli\'ere du calcul des
  in\'egalit\'es.
\newblock \emph{Nouveau Bulletin des Sciences par la Soci\'et\'e Philomatique
  de Paris}, pp.\  317--319, 1826.

\bibitem[Goodfellow et~al.(2013)Goodfellow, Warde-Farley, Mirza, Courville, and
  Bengio]{Goodfellow2013}
Goodfellow, I., Warde-Farley, D., Mirza, M., Courville, A., and Bengio, Y.
\newblock Maxout networks.
\newblock In \emph{ICML}, 2013.

\bibitem[He et~al.(2016)He, Zhang, Ren, and Sun]{He2016DeepRL}
He, K., Zhang, X., Ren, S., and Sun, J.
\newblock Deep residual learning for image recognition.
\newblock In \emph{CVPR}, 2016.

\bibitem[Hinton et~al.(2012)Hinton, Deng, Dahl, Mohamed, Jaitly, Senior,
  Vanhoucke, Nguyen, Sainath, and Kingsbury]{Hinton2012}
Hinton, G., Deng, L., Dahl, G., Mohamed, A., Jaitly, N., Senior, A., Vanhoucke,
  V., Nguyen, P., Sainath, T., and Kingsbury, B.
\newblock Deep neural networks for acoustic modeling in speech recognition.
\newblock \emph{IEEE Signal Processing Magazine}, 2012.

\bibitem[Jeroslow(1987)]{MIR}
Jeroslow, R.
\newblock Representability in mixed integer programming, {I}: Characterization
  results.
\newblock \emph{Discrete Applied Mathematics}, 17\penalty0 (3):\penalty0 223 --
  243, 1987.

\bibitem[Krizhevsky et~al.(2012)Krizhevsky, Sutskever, and
  Hinton]{Krizhevsky2012}
Krizhevsky, A., Sutskever, I., and Hinton, G.~E.
\newblock Imagenet classification with deep convolutional neural networks.
\newblock In \emph{NIPS}, 2012.

\bibitem[LeCun et~al.(1998)LeCun, Bottou, Bengio, and Haffner]{LeCun1998}
LeCun, Y., Bottou, L., Bengio, Y., and Haffner, P.
\newblock Gradient-based learning applied to document recognition.
\newblock \emph{Proceedings of the IEEE}, 86\penalty0 (11):\penalty0
  2278--2324, 1998.

\bibitem[Maass et~al.(1994)Maass, Schnitger, and Sontag]{Maass1994}
Maass, W., Schnitger, G., and Sontag, E.
\newblock A comparison of the computational power of sigmoid and boolean
  threshold circuits.
\newblock \emph{Theoretical Advances in Neural Computation and Learning}, pp.\
  127--151, 1994.

\bibitem[Mhaskar et~al.(2016)Mhaskar, Liao, and Poggio]{Mhaskar2016}
Mhaskar, H., Liao, Q., and Poggio, T.~A.
\newblock Learning real and boolean functions: When is deep better than
  shallow.
\newblock \emph{CoRR}, abs/1603.00988, 2016.

\bibitem[Mont\'{u}far(2017)]{Montufar2017}
Mont\'{u}far, G.
\newblock Notes on the number of linear regions of deep neural networks.
\newblock In \emph{SampTA}, 2017.

\bibitem[Mont\'{u}far et~al.(2014)Mont\'{u}far, Pascanu, Cho, and
  Bengio]{Montufar2014}
Mont\'{u}far, G., Pascanu, R., Cho, K., and Bengio, Y.
\newblock On the number of linear regions of deep neural networks.
\newblock In \emph{NIPS}, 2014.

\bibitem[Pan \& Srikumar(2016)Pan and Srikumar]{Pan2016}
Pan, X. and Srikumar, V.
\newblock Expressiveness of rectifier networks.
\newblock In \emph{ICML}, 2016.

\bibitem[Pascanu et~al.(2014)Pascanu, Mont\'{u}far, and Bengio]{Pascanu2013}
Pascanu, R., Mont\'{u}far, G., and Bengio, Y.
\newblock On the number of response regions of deep feedforward networks with
  piecewise linear activations.
\newblock In \emph{ICLR}, 2014.

\bibitem[Raghu et~al.(2017)Raghu, Poole, Kleinberg, Ganguli, and
  Sohl-Dickstein]{Raghu2017}
Raghu, M., Poole, B., Kleinberg, J., Ganguli, S., and Sohl-Dickstein, J.
\newblock On the expressive power of deep neural networks.
\newblock In \emph{ICML}, 2017.

\bibitem[Stirling(1730)]{Stirling}
Stirling, J.
\newblock \emph{Methodus Differentialis sive Tractatus de Summatione et
  Interpolatione Serierum Infinitarum}.
\newblock G. Strahan, London, 1730.

\bibitem[Szegedy et~al.(2015)Szegedy, Liu, Jia, Sermanet, Reed, Anguelov,
  Erhan, Vanhoucke, and Rabinovich]{Szegedy2015}
Szegedy, C., Liu, W., Jia, Y., Sermanet, P., Reed, S., Anguelov, D., Erhan, D.,
  Vanhoucke, V., and Rabinovich, A.
\newblock Going deeper with convolutions.
\newblock In \emph{CVPR}, 2015.

\bibitem[Telgarsky(2015)]{Telgarsky2015}
Telgarsky, M.
\newblock Representation benefits of deep feedforward networks.
\newblock \emph{CoRR}, abs/1509.08101, 2015.

\bibitem[Zaslavsky(1975)]{Zaslavsky1975}
Zaslavsky, T.
\newblock \emph{Facing up to arrangements: face-count formulas for partitions
  of space by hyperplanes}.
\newblock American Mathematical Society, 1975.

\end{thebibliography}
\bibliographystyle{icml2018}
\newpage
\onecolumn
\appendix
\appendixpage
Most of the proofs for theorems and lemmas associated with the upper and lower bounds on the linear regions are provided below. We also discuss counting on unrestricted input and the mixed-integer formulation for maxout networks.

\section{Analysis of the Bound from Theorem~\ref{thm:upper_bound_improved}}\label{sec:analysis_bound}

In this section, we present properties of the upper bound for the number of regions of a rectifier network from Theorem~\ref{thm:upper_bound_improved}. Denote the bound by $B(n_0, n_1, \ldots, n_L)$, where $n_0$ is the input dimension and $n_1, \ldots, n_L$ are the widths of layers 1 through $L$ of the network. That is,
\begin{align*}
B(n_0, n_1, \ldots, n_L) := \sum_{(j_1,\ldots,j_L) \in J} \prod_{l=1}^L \binom{n_l}{j_l}
\end{align*}

Instead of expressing $J$ as in Theorem~\ref{thm:upper_bound_improved}, we rearrange it to a more convenient form for the proofs in this section:
\begin{align*}
J = \{(j_1, \ldots, j_L) \in \mathbb{Z}^L : & j_l + j_k \leq n_k\;\forall k = 1, \ldots, l-1\;\forall l = 2, \ldots, L\\
& j_l \leq n_0\;\forall l = 1, \ldots, L\\
& 0 \leq j_l \leq n_l\;\forall l = 1, \ldots, L\}.
\end{align*}

Note that whenever we assume $n_0 \geq \max\{n_1, \ldots, n_l\}$, then the bound inequality for $n_0$ becomes redundant and can be removed.

Some of the results have implications in terms of the exact maximal number of regions. We denote it by $R(n_0, n_1, \ldots, n_L)$, following the same notation above.

Moreover, the following lemma is useful throughout the section.
\begin{lem}\label{lem:vandermonde}
\begin{align*}
\sum_{j = 0}^k \binom{n_1 + \ldots + n_L}{j} = \sum_{\substack{j_1 + \ldots + j_L \leq k\\0 \leq j_l \leq n_l\;\forall l}} \binom{n_1}{j_1} \binom{n_2}{j_2} \cdots \binom{n_L}{j_L}.
\end{align*}
\end{lem}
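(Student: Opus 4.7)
The identity is Vandermonde-like, and my plan is to reduce it to the classical Vandermonde convolution followed by a single outer summation. First I would recall that Vandermonde's identity gives, for each fixed $j$,
\begin{align*}
\binom{n_1 + n_2 + \cdots + n_L}{j} = \sum_{\substack{j_1 + \cdots + j_L = j \\ j_l \geq 0\;\forall l}} \binom{n_1}{j_1} \binom{n_2}{j_2} \cdots \binom{n_L}{j_L},
\end{align*}
which follows by induction on $L$ from the standard two-term Vandermonde identity $\binom{a+b}{j} = \sum_{i=0}^{j} \binom{a}{i} \binom{b}{j-i}$.

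Then I would sum both sides over $j = 0, 1, \ldots, k$. The left-hand side yields $\sum_{j=0}^{k} \binom{n_1 + \cdots + n_L}{j}$ as desired, and the right-hand side becomes a sum over all tuples $(j_1, \ldots, j_L)$ of non-negative integers with $j_1 + \cdots + j_L \leq k$. To match the form in the lemma, I would note that the binomial coefficient convention $\binom{n_l}{j_l} = 0$ whenever $j_l > n_l$ means that any tuple violating $j_l \leq n_l$ contributes zero; equivalently, we may restrict the summation to $0 \leq j_l \leq n_l$ for all $l$ without changing the value.

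As a sanity check I would also mention the combinatorial interpretation: both sides count subsets of a fixed $N$-element set (with $N = n_1 + \cdots + n_L$) of total size at most $k$, where the left-hand side counts them directly and the right-hand side counts them after partitioning the ground set into blocks of sizes $n_1, \ldots, n_L$ and grouping by how many elements $j_l$ are drawn from block $l$. There is no substantive obstacle here; the only subtlety is the bookkeeping needed to verify that the upper bound $j_l \leq n_l$ is automatically enforced by the vanishing of $\binom{n_l}{j_l}$ outside the valid range, so it can be written either as an explicit constraint on the index set (as in the lemma's statement) or left implicit.
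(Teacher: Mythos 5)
Your proposal is correct and follows essentially the same route as the paper, which likewise invokes the generalized (multi-term) Vandermonde identity and then sums over $j$ from $0$ to $k$; you simply spell out the induction on $L$ and the bookkeeping for the constraint $0 \leq j_l \leq n_l$ that the paper leaves implicit.
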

\begin{proof}
The result comes from taking a generalization of Vandermonde's identity and adding the summation of $j$ from $0$ to $k$ as above.
\end{proof}

We first examine some properties related to 2-layer networks. The proposition below characterizes the bound when $L = 2$ for large input dimensions.

\begin{prp}\label{prp:two_layer_bound}
Consider a 2-layer network with widths $n_1, n_2$ and input dimension $n_0 \geq n_1$ and $n_0 \geq n_2$. Then
\begin{align*}
B(n_0, n_1, n_2) = \sum_{j=0}^{n_1} \binom{n_1 + n_2}{j}
\end{align*}
If $n_0 < n_1$ or $n_0 < n_2$, the above holds with inequality: $B(n_0, n_1, n_2) \leq \sum_{j=0}^{n_1} \binom{n_1 + n_2}{j}$.
\end{prp}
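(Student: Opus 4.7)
The plan is to unpack the definition of $B(n_0, n_1, n_2)$ using the rearranged description of the index set $J$, show that several of the constraints are redundant under the hypothesis $n_0 \geq \max\{n_1, n_2\}$, and then invoke Lemma~\ref{lem:vandermonde} with $L=2$ and $k=n_1$ to get the closed form.

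Concretely, for $L=2$ the index set reads
\begin{align*}
J = \{(j_1, j_2) \in \mathbb{Z}^2 : 0 \leq j_1 \leq n_1,\ 0 \leq j_2 \leq n_2,\ j_1 \leq n_0,\ j_2 \leq n_0,\ j_1 + j_2 \leq n_1\}.
\end{align*}
Under the hypothesis $n_0 \geq n_1$ and $n_0 \geq n_2$, the two bounds $j_1 \leq n_0$ and $j_2 \leq n_0$ are implied by $j_1 \leq n_1$ and $j_2 \leq n_2$, so they can be dropped. Thus
\begin{align*}
B(n_0, n_1, n_2) \;=\; \sum_{\substack{j_1 + j_2 \leq n_1\\ 0 \leq j_1 \leq n_1,\ 0 \leq j_2 \leq n_2}} \binom{n_1}{j_1}\binom{n_2}{j_2}.
\end{align*}
Applying Lemma~\ref{lem:vandermonde} with $L=2$ and $k=n_1$ identifies the right-hand side with $\sum_{j=0}^{n_1}\binom{n_1+n_2}{j}$, giving the equality.

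For the inequality case, when $n_0 < n_1$ or $n_0 < n_2$, at least one of the constraints $j_1 \leq n_0$ or $j_2 \leq n_0$ becomes binding and cuts some pairs $(j_1, j_2)$ from $J$ that were present before. Since every summand $\binom{n_1}{j_1}\binom{n_2}{j_2}$ is nonnegative, summing over this smaller set yields a value no larger than $\sum_{j=0}^{n_1}\binom{n_1+n_2}{j}$.

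The proof is essentially a bookkeeping exercise. The only subtle point is making sure the redundancies of the constraints are correctly identified (in particular that $j_2 \leq n_1 - j_1$ combined with $j_1 \geq 0$ does not interact with the $n_0$ bounds in a surprising way); the rest is an immediate appeal to Lemma~\ref{lem:vandermonde}, so I do not anticipate any genuine obstacle here.
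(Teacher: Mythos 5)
Your proposal is correct and follows essentially the same route as the paper's proof: drop the $n_0$-constraints from the index set $J$ as redundant when $n_0 \geq \max\{n_1, n_2\}$, apply Lemma~\ref{lem:vandermonde} with $k = n_1$, and obtain the inequality case from containment of index sets together with nonnegativity of the summands. The only cosmetic difference is that you assert the $n_0$-constraints necessarily \emph{cut} some pairs when $n_0 < n_1$ or $n_0 < n_2$ (which need not happen, e.g.\ the constraint $j_2 \leq n_0$ may already be implied by $j_1 + j_2 \leq n_1$), but since the argument only needs containment, the conclusion is unaffected.
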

\begin{proof}
If $n_0 \geq n_1$ and $n_0 \geq n_2$, the bound inequalities for $n_0$ in the index set $J$ become redundant. By applying Lemma~\ref{lem:vandermonde}, we obtain
\begin{align*}
B(n_0, n_1, n_2) = \sum_{\substack{0 \leq j_1 + j_2 \leq n_1\\0 \leq j_l \leq n_l\ \forall l}} \binom{n_1}{j_1} \binom{n_2}{j_2} = \sum_{j=0}^{n_1} \binom{n_1 + n_2}{j}.
\end{align*}

If $n_0 < n_1$ or $n_0 < n_2$, then its index set $J$ is contained by the one above, and thus the first equal sign above becomes a less-or-equal sign.
\end{proof}

Recall that the expression on the right-hand side of Proposition~\ref{prp:two_layer_bound} is equal to the maximal number of regions of a single-layer network with $n_1 + n_2$ ReLUs and input dimension $n_1$, as discussed in Section~\ref{sec:background}. Hence, the proposition implies that for large input dimensions, a two-layer network has no more regions than a single-layer network with the same number of neurons, as formalized below.

\begin{cor}
Consider a 2-layer network with widths $n_1, n_2 \geq 1$ and input dimension $n_0 \geq n_1$ and $n_0 \geq n_2$. Then $R(n_0, n_1, n_2) \leq R(n_0, n_1 + n_2)$.

Moreover, this inequality is strict when $n_0 > n_1$.
\end{cor}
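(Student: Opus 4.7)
The plan is to chain together Theorem~\ref{thm:upper_bound_improved}, Proposition~\ref{prp:two_layer_bound}, and Zaslavsky's closed form for single-layer networks noted in Section~\ref{sec:background}. Specifically, Theorem~\ref{thm:upper_bound_improved} gives $R(n_0, n_1, n_2) \leq B(n_0, n_1, n_2)$. Under the assumption $n_0 \geq n_1$ and $n_0 \geq n_2$, Proposition~\ref{prp:two_layer_bound} evaluates this to $B(n_0, n_1, n_2) = \sum_{j=0}^{n_1} \binom{n_1+n_2}{j}$. On the other hand, the single-layer case is tight, so $R(n_0, n_1+n_2) = \sum_{j=0}^{\min\{n_0, n_1+n_2\}} \binom{n_1+n_2}{j}$.

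The weak inequality then follows immediately: since $n_0 \geq n_1$, we have $\min\{n_0, n_1+n_2\} \geq n_1$, and extending a sum of non-negative binomial terms to a larger upper limit can only increase it. Thus
\[
R(n_0, n_1, n_2) \leq \sum_{j=0}^{n_1} \binom{n_1+n_2}{j} \leq \sum_{j=0}^{\min\{n_0, n_1+n_2\}} \binom{n_1+n_2}{j} = R(n_0, n_1+n_2).
\]

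For the strict inequality when $n_0 > n_1$, I would argue that the second $\leq$ above becomes strict. Indeed, $n_0 \geq n_1+1$, and since $n_2 \geq 1$ we also have $n_1 + n_2 \geq n_1 + 1$, so $\min\{n_0, n_1+n_2\} \geq n_1 + 1$. The extended sum then contains at least the additional term $\binom{n_1+n_2}{n_1+1}$, which is strictly positive because $1 \leq n_1+1 \leq n_1+n_2$.

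There is essentially no obstacle here; the work has been front-loaded into Proposition~\ref{prp:two_layer_bound}. The only point meriting care is ensuring the summand $\binom{n_1+n_2}{n_1+1}$ is actually within range when verifying strictness, which relies on the hypothesis $n_2 \geq 1$ to guarantee $n_1+1 \leq n_1+n_2$.
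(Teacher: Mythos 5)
Your proof is correct and follows essentially the same route as the paper's: it chains $R(n_0,n_1,n_2)\leq B(n_0,n_1,n_2)=\sum_{j=0}^{n_1}\binom{n_1+n_2}{j}$ from Proposition~\ref{prp:two_layer_bound} with the single-layer exact count, and obtains strictness from the extra binomial term when $n_0>n_1$. The only difference is that you spell out the $\min\{n_0,n_1+n_2\}$ upper limit and the positivity of $\binom{n_1+n_2}{n_1+1}$ explicitly, which the paper leaves implicit.
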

\begin{proof}
This is a direct consequence of Proposition~\ref{prp:two_layer_bound}:
\begin{align*}
R(n_0, n_1, n_2) \leq B(n_0, n_1, n_2) = \sum_{j=0}^{n_1} \binom{n_1 + n_2}{j} \leq \sum_{j=0}^{n_0} \binom{n_1 + n_2}{j} = R(n_0, n_1 + n_2).
\end{align*}
Note that if $n_0 > n_1$, then the second inequality can be turned into a strict inequality.
\end{proof}

The next proposition illustrates the bottleneck effect for two layers. It states that for large input dimensions, moving a neuron from the second layer to the first strictly increases the bound. This proposition is stated more informally in the main text.

\begin{refprp}{prp:bottleneck_2_layer}
Consider a 2-layer network with widths $n_1, n_2$ and input dimension $n_0 \geq n_1 + 1$ and $n_0 \geq n_2 + 1$. Then $B(n_0, n_1 + 1, n_2) > B(n_0, n_1, n_2 + 1)$.
\end{refprp}
\begin{proof}
By Proposition~\ref{prp:two_layer_bound},
\begin{align*}
B(n_0, n_1 + 1, n_2) = \sum_{j=0}^{n_1 + 1} \binom{(n_1 + 1) + n_2}{j} > \sum_{j=0}^{n_1} \binom{n_1 + (n_2 + 1)}{j} = B(n_0, n_1, n_2 + 1).
\end{align*}
\end{proof}

The assumption that $n_0$ must be large is required for the above proposition; otherwise, the input itself may create a bottleneck with respect to the second layer as we decrease its size. Note that the bottleneck affects all subsequent layers, not only the layer immediately after it.

However, it is not true that moving neurons to earlier layers always increases the bound. For instance, with three layers, $B(4, 3, 2, 1) = 47 > 46 = B(4, 4, 1, 1)$. 

In the remainder of this section, we consider deep networks of equal widths $n$. The next proposition can be viewed as an extension of Proposition~\ref{prp:two_layer_bound} for multiple layers. It states that for a network with widths and input dimension $n$ and at least 4 layers, if we halve the number of layers and redistribute the neurons so that the widths become $2n$, then the bound increases. In other words, if we assume the bound to be close to the maximal number of regions, it suggests that making a deep network shallower allows for more regions when the input dimension is equal to the width.

\begin{prp}
Consider a $2L$-layer network with equal widths $n$ and input dimension $n_0 = n$. Then
\begin{align*}
B(n, \underbrace{n, \ldots, n}_{\text{$2L$ times}}) \leq B(n, \underbrace{2n, \ldots, 2n}_{\text{$L$ times}}).
\end{align*}
This inequality is met with equality when $L = 1$ and strict inequality when $L \geq 2$.
\end{prp}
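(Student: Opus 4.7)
The plan is to simplify each side into a sum indexed by a subset of $\mathbb{Z}_{\geq 0}^{2L}$ weighted by the same product of binomial coefficients, and then compare the two index sets via a containment argument.

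First, I would simplify the right-hand side. When all widths are $2n$ and $n_0 = n$, each bound $j_l \le \min\{n_0, n_l - \text{stuff}\}$ reduces to $j_l \le n$, because the input-dimension cap $n_0 = n$ forces $j_l \le n$, which in turn makes the cross constraints $j_l + j_k \le n_k = 2n$ automatic. Hence the index set collapses to the product $\{(i_1,\ldots,i_L) : 0 \le i_l \le n\}$, and
\[
B(n, \underbrace{2n, \ldots, 2n}_{L}) = \Bigl(\sum_{i=0}^{n} \binom{2n}{i}\Bigr)^{L}.
\]
Next, I would apply Lemma~\ref{lem:vandermonde} with $n_1 = n_2 = n$ and $k = n$ to each factor, writing $\sum_{i=0}^{n}\binom{2n}{i} = \sum_{j_a + j_b \le n,\ 0 \le j_a,j_b \le n} \binom{n}{j_a}\binom{n}{j_b}$. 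Expanding the $L$-th power and reindexing the $2L$ summation variables as $(j_1,\ldots,j_{2L})$ yields
\[
B(n, \underbrace{2n, \ldots, 2n}_{L}) = \sum_{(j_1,\ldots,j_{2L}) \in J'} \prod_{l=1}^{2L} \binom{n}{j_l},
\qquad J' := \{(j_1,\ldots,j_{2L}) : 0 \le j_l \le n,\ j_{2l-1} + j_{2l} \le n\ \forall l\}.
\]

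For the left-hand side, the rearranged index set given at the start of the appendix reduces, for $n_0 = n_1 = \cdots = n_{2L} = n$, to
\[
J = \{(j_1,\ldots,j_{2L}) : 0 \le j_l \le n,\ j_k + j_l \le n\ \forall k < l\}.
\]
Thus $J$ requires \emph{every} pair $(j_k, j_l)$ to sum to at most $n$, whereas $J'$ requires this only for the $L$ specific consecutive pairs $(j_{2l-1},j_{2l})$. Consequently $J \subseteq J'$, and since the integrand $\prod_l \binom{n}{j_l}$ is nonnegative,
\[
B(n, \underbrace{n, \ldots, n}_{2L}) = \sum_{J} \prod_{l=1}^{2L}\binom{n}{j_l} \ \le\ \sum_{J'} \prod_{l=1}^{2L}\binom{n}{j_l} = B(n, \underbrace{2n, \ldots, 2n}_{L}).
\]

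Finally, I would pin down when the inequality is strict. When $L = 1$ there is only one pair $(j_1, j_2)$, so the constraints defining $J$ and $J'$ coincide and the inequality is an equality. When $L \ge 2$, the tuple $(n, 0, n, 0, \ldots, n, 0)$ lies in $J'$ (each consecutive pair sums to $n$) but not in $J$ (since $j_1 + j_3 = 2n > n$); its contribution is $\binom{n}{n}^{L}\binom{n}{0}^{L} = 1 > 0$, so $J \subsetneq J'$ strictly and the inequality is strict. The only delicate step is verifying that the $n_0$-cap makes the cross constraints redundant on the right-hand side so that the factorization is valid; everything else is bookkeeping with Vandermonde and a containment of polytopal index sets.
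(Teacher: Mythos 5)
Your proposal is correct and follows essentially the same route as the paper's proof: simplify the right-hand side's index set to a full product using the cap $n_0=n$, apply Lemma~\ref{lem:vandermonde} to each factor of $\bigl(\sum_{j=0}^{n}\binom{2n}{j}\bigr)^{L}$, and compare the resulting index set against the left-hand side's by containment. The only difference is that you make the strictness claim explicit with the witness tuple $(n,0,n,0,\ldots,n,0)$, which the paper merely asserts; that is a welcome addition rather than a deviation.
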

\begin{proof}
When $n_0 = n$, the inequalities $j_l \leq \min\{n_0, 2n - j_1, \ldots, 2n - j_{l-1}, 2n\}$ appearing in $J$ (in the form presented in Theorem~\ref{thm:upper_bound_improved}) can be simplified to $j_l \leq n$. Therefore, using Lemma~\ref{lem:vandermonde}, the bound on the right-hand side becomes
\begin{align*}
B(n, \underbrace{2n, \ldots, 2n}_{\text{$L$ times}}) &= \sum_{j_1 = 0}^n \sum_{j_2 = 0}^n \ldots \sum_{j_L = 0}^n \prod_{l=1}^L \binom{2n}{j_l} = \left(\sum_{j = 0}^n \binom{2n}{j}\right)^L = \left(\sum_{j_1 = 0}^n \sum_{j_2 = 0}^{n - j_1} \binom{n}{j_1} \binom{n}{j_2} \right)^L\\
& \geq \sum_{(j_1,\ldots,j_{2L}) \in J} \prod_{l=1}^{2L} \binom{n}{j_l} = B(n, \underbrace{n, \ldots, n}_{\text{$2L$ times}}).
\end{align*}
where $J$ above is the index set from Theorem~\ref{thm:upper_bound_improved} applied to $n_0 = n_l = n$ for all $l = 1, \ldots, 2L$. Note that we can turn the inequality into equality when $L = 1$ (also becoming a consequence of Proposition~\ref{prp:two_layer_bound}) and into strict inequality when $L \geq 2$.
\end{proof}

Next, we provide an upper bound that is independent of $n_0$ based on Theorem~\ref{thm:upper_bound_improved}.

\begin{cor}\label{cor:bound_to_bound}
Consider an $L$-layer network with equal widths $n$ and any input dimension $n_0 \geq 0$.
\begin{align*}
B(n_0, n, \ldots, n) \leq 2^{Ln} \left(\frac{1}{2} + \frac{1}{2 \sqrt{\pi n}}\right)^{L/2} \sqrt{2}
\end{align*}
\end{cor}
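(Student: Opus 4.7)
The plan is to exploit two successive relaxations of the index set $J$ and then apply Vandermonde's identity together with a standard estimate for the central binomial coefficient. First I would drop the bounds $j_l \leq n_0$ from $J$; since the terms $\binom{n}{j_l}$ are nonnegative, this can only enlarge the sum, and it also explains why the final bound is independent of $n_0$. What remains is the system of pairwise constraints $j_k + j_l \leq n$ for every $k < l$, together with $0 \leq j_l \leq n$.

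The key move is to group the indices into disjoint pairs $(1,2),(3,4),\ldots$ and discard every cross-pair constraint, retaining only the intra-pair inequalities $j_{2k-1}+j_{2k}\leq n$. This relaxation splits the sum into a product over pairs:
\begin{align*}
B(n_0,n,\ldots,n)\ \leq\ \prod_{k}\ \sum_{\substack{j_{2k-1}+j_{2k}\leq n\\ 0\leq j_{2k-1},j_{2k}\leq n}} \binom{n}{j_{2k-1}}\binom{n}{j_{2k}}.
\end{align*}
Applying Lemma~\ref{lem:vandermonde} (or directly $\sum_{a}\binom{n}{a}\binom{n}{s-a}=\binom{2n}{s}$), each pair factor equals $\sum_{s=0}^{n}\binom{2n}{s}$, which by the symmetry $\binom{2n}{s}=\binom{2n}{2n-s}$ evaluates to $2^{2n-1}+\binom{2n}{n}/2$. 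Combining this with the classical inequality $\binom{2n}{n}\leq 2^{2n}/\sqrt{\pi n}$ gives each pair factor $\leq 2^{2n}\bigl(\tfrac{1}{2}+\tfrac{1}{2\sqrt{\pi n}}\bigr)$.

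When $L$ is even, taking the product over the $L/2$ pairs yields exactly $2^{Ln}\bigl(\tfrac{1}{2}+\tfrac{1}{2\sqrt{\pi n}}\bigr)^{L/2}$, so the $\sqrt{2}$ in the target is slack. When $L$ is odd I pair up the first $L-1$ layers and bound the leftover trivially by $\sum_{j}\binom{n}{j}=2^n$, obtaining $2^{Ln}\bigl(\tfrac{1}{2}+\tfrac{1}{2\sqrt{\pi n}}\bigr)^{(L-1)/2}$. The missing half-power is absorbed using the elementary inequality $\sqrt{2}\cdot\bigl(\tfrac{1}{2}+\tfrac{1}{2\sqrt{\pi n}}\bigr)^{1/2}\geq 1$, which holds because $\tfrac{1}{2}+\tfrac{1}{2\sqrt{\pi n}}\geq\tfrac{1}{2}$. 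This is precisely where the $\sqrt{2}$ in the statement earns its keep.

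The main idea rather than a true obstacle is the pairing trick: the $L/2$ exponent in the target bound is the tell-tale sign that one should process layers two at a time, accepting the loss of the cross-pair constraints, and the $\sqrt{2}$ is the signature of needing a uniform parity argument. The only routine calculations to verify carefully are the identity $\sum_{s=0}^{n}\binom{2n}{s}=2^{2n-1}+\binom{2n}{n}/2$ and the standard bound on the central binomial coefficient, both of which are elementary.
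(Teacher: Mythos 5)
Your proposal is correct and follows essentially the same route as the paper's proof: both relax the index set, pair consecutive layers and apply Vandermonde's identity to get factors of $\sum_{j=0}^{n}\binom{2n}{j}=2^{2n-1}+\binom{2n}{n}/2$, bound the central binomial coefficient via Stirling, and absorb the odd-$L$ leftover layer (bounded by $2^n$) into the $\sqrt{2}$ factor. The only cosmetic difference is that the paper converts the odd case by bounding $2^n/(\sum_{j=0}^{n}\binom{2n}{j})^{1/2}\leq\sqrt{2}$ directly, whereas you use the equivalent inequality $\sqrt{2}\,(\tfrac{1}{2}+\tfrac{1}{2\sqrt{\pi n}})^{1/2}\geq 1$.
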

\begin{proof}
Since we are deriving an upper bound, we can assume $n_0 \geq n$, as the bound is nondecreasing on $n_0$. We first assume that $L$ is even. We relax some of the constraints of the index set $J$ from Theorem~\ref{thm:upper_bound_improved} and apply Vandermonde's identity on each pair:
\begin{align*}
B(n_0, n, \ldots, n) &\leq \sum_{j_1 = 0}^n \sum_{j_2 = 0}^{n-j_1} \binom{n}{j_1} \binom{n}{j_2} \sum_{j_3 = 0}^n \sum_{j_4 = 0}^{n-j_3} \binom{n}{j_3} \binom{n}{j_4} \cdots \sum_{j_{L-1} = 0}^n \sum_{j_L = 0}^{n-j_{L-1}} \binom{n}{j_{L-1}} \binom{n}{j_L}\\
&= \left(\sum_{j=0}^n \binom{2n}{j}\right)^{L/2} = \left(\frac{2^{2n} + \binom{2n}{n}}{2}\right)^{L/2} \leq \left(\frac{2^{2n} + \frac{2^{2n}}{\sqrt{\pi n}}}{2}\right)^{L/2}\\
&= 2^{Ln} \left(\frac{1}{2} + \frac{1}{2\sqrt{\pi n}}\right)^{L/2}.
\end{align*}

The bound on $\binom{2n}{n}$ is a direct application of Stirling's approximation~\citep{Stirling}. If $L$ is odd, then we can write
\begin{align*}
B(n_0, n, \ldots, n) &\leq \left(\sum_{j=0}^n \binom{2n}{j}\right)^{(L-1)/2} \left(\sum_{j=0}^n \binom{n}{j}\right) = \left(\sum_{j=0}^n \binom{2n}{j}\right)^{L/2} \frac{2^n}{\left(\sum_{j=0}^n \binom{2n}{j}\right)^{1/2}}\\
& \leq \left(\sum_{j=0}^n \binom{2n}{j}\right)^{L/2} \frac{2^n}{(2^{2n} / 2)^{1/2}} = \left(\sum_{j=0}^n \binom{2n}{j}\right)^{L/2} \sqrt{2}\\
& \leq 2^{Ln} \left(\frac{1}{2} + \frac{1}{2\sqrt{\pi n}}\right)^{L/2} \sqrt{2}
\end{align*}
where the last inequality is analogous to the even case. Hence, the result follows.
\end{proof}

We now use Corollary~\ref{cor:bound_to_bound} to show that a shallow network can attain more linear regions than a deeper one of the same size when the input dimension $n_0$ exceeds the total number of neurons. This result is stated in the main text.

\begin{refcor}{cor:shallow_vs_deep}
Let $L \geq 2$, $n \geq 1$, and $n_0 \geq Ln$. Then
\begin{align*}
R(n_0, \underbrace{n, \ldots, n}_{\text{$L$ times}}) < R(n_0, Ln)
\end{align*}

Moreover, $\lim_{L \to \infty} \frac{R(n_0, n, \ldots, n)}{R(n_0, Ln)} = 0$.
\end{refcor}
\begin{proof}
Note first that if $n_0 \geq Ln$, then $R(n_0, Ln) = \sum_{j=0}^{Ln} \binom{Ln}{j} = 2^{Ln}$. 

The inequality can be derived from Proposition~\ref{cor:bound_to_bound}, since $\left(\frac{1}{2} + \frac{1}{2 \sqrt{\pi n}}\right)^{L/2} \sqrt{2} < 1$ when $L \geq 3$ and $n \geq 1$, and thus $R(n_0, n, \ldots, n) \leq B(n_0, n, \ldots, n) < 2^{Ln} = R(n_0, Ln)$. The same holds when $L = 2$: as noted in the proof of Proposition~\ref{cor:bound_to_bound}, we may discard the factor of $\sqrt{2}$ when $L$ is even, and $\left(\frac{1}{2} + \frac{1}{2 \sqrt{\pi n}}\right)^{L/2} < 1$ for $L = 2$ and $n \geq 1$.

Proposition~\ref{cor:bound_to_bound} also implies that
\begin{align*}
\frac{R(n_0, n, \ldots, n)}{R(n_0, Ln)} \leq \frac{B(n_0, n, \ldots, n)}{2^{Ln}} \leq \left(\frac{1}{2} + \frac{1}{2\sqrt{\pi n}}\right)^{L/2} \sqrt{2}.
\end{align*}

Since the base of the first term of the above expression is less than 1 for all $n \geq 1$ and $\sqrt{2}$ is a constant, the ratio goes to 0 as $L$ goes to infinity.
\end{proof}

\section{Exponential Maximal Number of Regions When Input Dimension is Large}\label{sec:exp_number_regions}

\begin{prp}
Consider an $L$-layer rectifier network with equal widths $n$ and input dimension $n_0 \geq n / 3$. Then the maximal number of regions is $\Omega(2^{\frac{2}{3}Ln})$.
\end{prp}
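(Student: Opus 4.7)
The plan is to reduce to a situation where Theorem~\ref{thm:lower_bound} applies directly. Set $n' := \lfloor n/3 \rfloor$. Since $n_0 \geq n/3 \geq n'$, I can construct an $L$-layer width-$n$ rectifier network that depends on only $n'$ of the input coordinates by zeroing out the first-layer weights for the remaining $n_0 - n'$ inputs. The resulting linear regions partition $\mathbb{R}^{n_0}$ into cylinders along the ignored coordinate directions, which are in bijection with the regions of a genuine $L$-layer width-$n$ network of input dimension $n'$. Thus the maximal number of regions of the original network is at least the maximum for input dimension $n'$.

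Now $n \geq 3n'$ by construction, so Theorem~\ref{thm:lower_bound} yields a lower bound of
\[
\Bigl(\prod_{l=1}^{L-1}\bigl(\lfloor n/n'\rfloor + 1\bigr)^{n'}\Bigr)\sum_{j=0}^{n'}\binom{n}{j}.
\]
Because $n' \leq n/3$, we have $\lfloor n/n'\rfloor \geq 3$, so each factor in the product is at least $4^{n'}$, giving $4^{n'(L-1)} = 2^{2n'(L-1)}$. For the summation I would retain only the largest term $\binom{n}{n'}$ and apply the standard entropy estimate $\binom{n}{\lfloor \alpha n\rfloor} \geq 2^{nH(\alpha)}/(n+1)$ with $\alpha = 1/3$, where $H(1/3) = \log_2 3 - 2/3 \approx 0.918$.

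Combining these gives a lower bound of roughly $2^{2n'(L-1) + nH(1/3)}/(n+1)$. Using $n' \geq n/3 - 1$, the exponent is at least $\tfrac{2}{3}nL + n\bigl(H(1/3) - \tfrac{2}{3}\bigr) - O(L)$. The key point is that $H(1/3) - 2/3 > 0$, so the factor $2^{n(H(1/3)-2/3)}$ absorbs both the polynomial $1/(n+1)$ term and the additive $O(L)$ losses from the floor operations, delivering $\Omega(2^{\frac{2}{3}Ln})$ as claimed. The main obstacle is purely bookkeeping: making sure the slack provided by $H(1/3) > 2/3$ is genuinely larger than all the rounding losses introduced by taking $n' = \lfloor n/3 \rfloor$ and discarding all but the central binomial term; once this inequality is noted, the rest of the estimate is routine.
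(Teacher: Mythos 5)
Your proposal is correct and follows essentially the same route as the paper: restrict attention to $n' = \lfloor n/3\rfloor$ of the input coordinates (the paper simply asserts that enlarging the input dimension cannot decrease the maximal region count, which your cylinder construction justifies), then invoke Theorem~\ref{thm:lower_bound} with $n_0' = \lfloor n/3\rfloor$ so that every layer factor is at least $4^{\lfloor n/3\rfloor}$. The paper avoids your entropy estimate for $\binom{n}{n'}$ by using the simpler pre-tightening form of the lower bound, $\bigl(\prod_{l=1}^{L}(\lfloor n_l/n_0'\rfloor+1)\bigr)^{n_0'} \geq 4^{L\lfloor n/3\rfloor}$, in which the last layer is treated like all the others; both arguments carry the same harmless flooring slack in the exponent.
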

\begin{proof}
It suffices to show that a lower bound such as the one from Theorem~\ref{thm:lower_bound} grows exponentially large. For simplicity, we consider the lower bound $(\prod_{l=1}^L (\lfloor n_l / n_0 \rfloor + 1))^{n_0}$, which is the bound obtained before the last tightening step in the proof of Theorem~\ref{thm:lower_bound} (see Appendix~\ref{sec:proof_lower_bound}).

Note that replacing $n_0$ in the above expression by a value $n_0'$ smaller than the input dimension still yields a valid lower bound. This holds because increasing the input dimension of a network from $n_0'$ to $n_0$ cannot decrease its maximal number of regions.

Choose $n_0' = \lfloor n / 3 \rfloor$, which satisfies $n_0' \leq n_0$ and the condition $n \geq 3n_0'$ of Theorem~\ref{thm:lower_bound}. The lower bound can be expressed as $( \lfloor n / \lfloor n/3 \rfloor \rfloor + 1)^{L \lfloor n/3 \rfloor} \geq 4^{L \lfloor n/3 \rfloor}$. This implies that the maximal number of regions is $\Omega(2^{\frac{2}{3}Ln})$.
\end{proof}

\section{Proof of Lemma~\ref{lem:regions_rank}}\label{sec:proof_regions_rank}

\begin{reflem}{lem:regions_rank}
Consider $m$ hyperplanes in $\mathbb{R}^d$ defined by the rows of $Wx + b = 0$. Then the number of regions induced by the hyperplanes is at most $\sum_{j=0}^{\operatorname{rank}(W)} \binom{m}{j}$.
\end{reflem}
\begin{proof}
Consider the row space $\mathcal{R}(W)$ of $W$, which is a subspace of $\mathbb{R}^d$ of dimension $\operatorname{rank}(W)$. We show that the number of regions $N_{\mathbb{R}^d}$ in $\mathbb{R}^d$ is equal to the number of regions $N_{\mathcal{R}(W)}$ in $\mathcal{R}(W)$ induced by $Wx + b = 0$ restricted to $\mathcal{R}(W)$. This suffices to prove the lemma since $\mathcal{R}(W)$ has at most $\sum_{j=0}^{\operatorname{rank}(W)} \binom{m}{j}$ regions according to Zaslavsky's theorem.

Since $\mathcal{R}(W)$ is a subspace of $\mathbb{R}^d$, it directly follows that $N_{\mathcal{R}(W)} \leq N_{\mathbb{R}^d}$. To show the converse, we apply the orthogonal decomposition theorem from linear algebra: any point $\bar{x} \in \mathbb{R}^d$ can be expressed uniquely as $\bar{x} = \hat{x} + y$, where $\hat{x} \in \mathcal{R}(W)$ and $y \in \mathcal{R}(W)^{\perp}$. Here, $\mathcal{R}(W)^{\perp} = \text{Ker}(W) := \{y \in \mathbb{R}^d: Wy = 0\}$, and thus $W\bar{x} = W\hat{x} + Wy = W\hat{x}$. This means $\bar{x}$ and $\hat{x}$ lie on the same side of each hyperplane of $Wx + b = 0$ and thus belong to the same region. In other words, given any $\bar{x} \in \mathbb{R}^d$, its region is the same one that $\hat{x} \in \mathcal{R}(W)$ lies in. Therefore, $N_{\mathbb{R}^d} \leq N_{\mathcal{R}(W)}$. Hence, $N_{\mathbb{R}^d} = N_{\mathcal{R}(W)}$ and the result follows.
\end{proof}

\section{Proof of Theorem~\ref{thm:dimension_one}}\label{sec:proof_dim_one}

\begin{refthm}{thm:dimension_one}
Consider a deep rectifier network with $L$ layers, $n_l \geq 3$ rectified linear units at each layer $l$, and an input of dimension 1. The maximal number of regions of this neural network is exactly $\prod_{l=1}^L (n_l + 1)$.
\end{refthm}
\begin{proof}
Section~\ref{sec:bound_rectifier} provides us with a helpful insight to construct an example with a large number of regions. It tells us that we want regions to have images with large dimension in general. In particular, regions with images of dimension zero cannot be further partitioned. This suggests that the one-dimensional construction from~\citet{Montufar2014} can be improved, as it contains $n$ regions with images of dimension one and 1 region with image of dimension zero. This is because all ReLUs point to the same direction as depicted in Figure~\ref{fig:1d_construction}, leaving one region with an empty activation pattern.

Our construction essentially increases the dimension of the image of this region from zero to one. This is done by shifting the neurons forward and flipping the direction of the third neuron, as illustrated in Figure~\ref{fig:1d_construction}. We assume $n \geq 3$.

\begin{figure}[htp]
  \centering
    \includegraphics[width=0.8\textwidth]{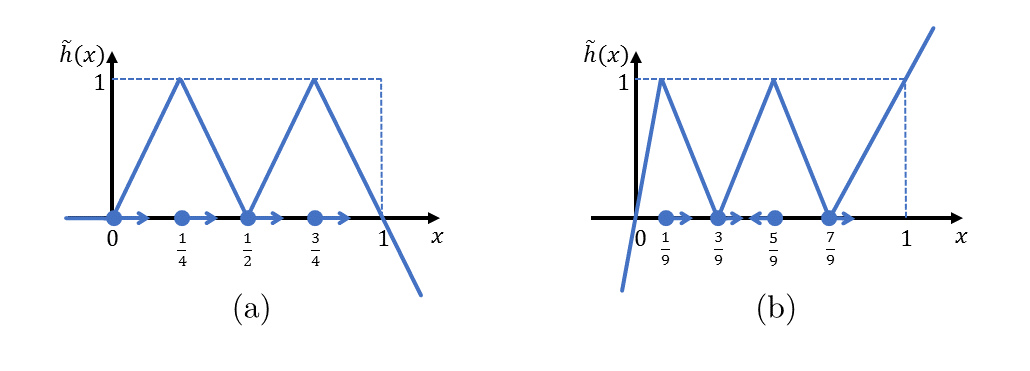}
  \caption{\emph{(a) The 1D construction from~\citet{Montufar2014}. All units point to the right, leaving a region with image of dimension zero before the origin. (b) The 1D construction described in this section. Within the interval $[0,1]$ there are five regions instead of the four in (a).}}
	\label{fig:1d_construction}
\end{figure}

We review the intuition behind the construction strategy from~\citet{Montufar2014}. They construct a linear function $\tilde{h} : \mathbb{R} \to \mathbb{R}$ with a zigzag pattern from $[0,1]$ to $[0,1]$ that is composed of $n$ ReLUs. More precisely, $\tilde{h}(x) = [1, -1, 1, \ldots, (-1)^{n+1}] [h_1(x), h_2(x), \ldots, h_n(x)]^\top$, where $h_i(x)$ for $i=1, \ldots, n$ are ReLUs. This linear function can be absorbed in the preactivation function of the next layer.

The zigzag pattern allows it to replicate in each slope a scaled copy of the function in the domain $[0,1]$. Figure~\ref{fig:1d_replication} shows an example of this effect. Essentially, when we compose $\tilde{h}$ with itself, each linear piece in $[t_1, t_2]$ such that $\tilde{h}(t_1) = 0$ and $\tilde{h}(t_2) = 1$ maps the entire function $\tilde{h}$ to the interval $[t_1, t_2]$, and each piece such that $\tilde{h}(t_1) = 1$ and $\tilde{h}(t_2) = 0$ does the same in a backward manner.

\begin{figure}[htp]
  \centering
    \includegraphics[width=\textwidth]{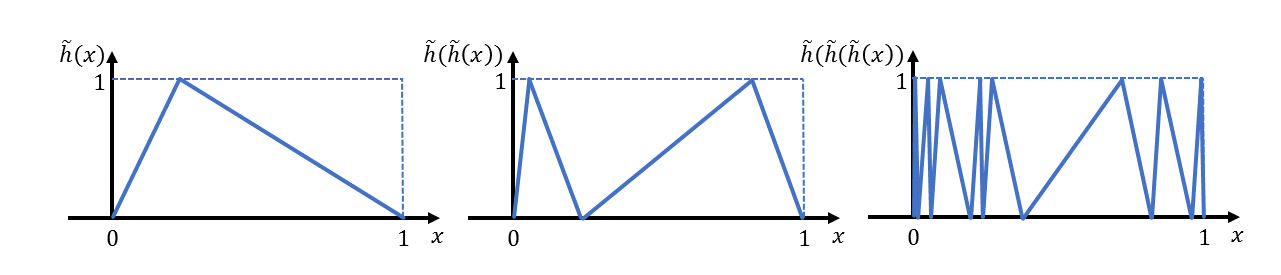}
  \caption{\emph{A function with a zigzag pattern composed with itself. Note that the entire function is replicated within each linear region, up to a scaling factor.}}
	\label{fig:1d_replication}
\end{figure}

In our construction, we want to use $n$ ReLUs to create $n+1$ regions instead of $n$. In other words, we want the construct this zigzag pattern with $n+1$ slopes. In order to do that, we take two steps to give ourselves more freedom. First, observe that we only need each linear piece to go from zero to one or one to zero; that is, the construction works independently of the length of each piece. Therefore, we turn the breakpoints into parameters $t_1, t_2, \ldots, t_n$, where $0 < t_1 < t_2 < \ldots < t_n < 1$. Second, we add sign and bias parameters to the function $\tilde{h}$. That is, $\tilde{h}(x) = [s_1, s_2, \ldots, s_n] [h_1(x), h_2(x), \ldots, h_n(x)]^\top + d$, where $s_i \in \{-1, +1\}$ and $d$ are parameters to be set. Here, $h_i(x) = \max \{0, \tilde{w}_i x + \tilde{b}_i\}$ since it is a ReLU.

We define $w_i = s_i \tilde{w}_i$ and $b_i = s_i \tilde{b}_i$, which are the weights and biases we seek in each interval to form the zigzag pattern. The parameters $s_i$ are needed because the signs of $\tilde{w}_i$ cannot be arbitrary: it must match the directions the ReLUs point towards. In particular, we need a positive slope ($\tilde{w}_i > 0$) if we want $i$ to point right, and a negative slope ($\tilde{w}_i < 0$) if we want $i$ to point left. Hence, without loss of generality, we do not need to consider the $s_i$'s any further since they will be directly defined from the signs of the $w_i$'s and the directions. More precisely, $s_i = 1$ if $w_i \geq 0$ and $s_i = -1$ otherwise for $i = 1, 2, 4, \ldots, n$, and $s_3 = -1$ if $w_3 \geq 0$ and $s_3 = 1$ otherwise.

To summarize, our parameters are the weights $w_i$ and biases $b_i$ for each ReLU, a global bias $d$, and the breakpoints $0 < t_1 < \ldots < t_n < 1$. Our goal is to find values for these parameters such that each piece in the function $\tilde{h}$ with domain in $[0,1]$ is linear from zero to one or one to zero.

More precisely, if the domain is $[s, t]$, we want each linear piece to be either $\frac{1}{t - s}x - \frac{s}{t - s}$ or $-\frac{1}{t - s}x + \frac{t}{t - s}$, which define linear functions from zero to one and from one to zero respectively. Since we want a zigzag pattern, the former should happen for the interval $[t_i, t_{i-1}]$ when $i$ is odd and the latter should happen when $i$ is even.

There is one more set of parameters that we will fix. Each ReLU corresponds to a hyperplane, or a point in dimension one. In fact, these points are the breakpoints $t_1, \ldots, t_n$. They have directions that define for which inputs the neuron is activated. For instance, if a neuron $h_i$ points to the right, then the neuron $h_i(x)$ outputs zero if $x \leq t_i$ and the linear function $w_i x + b_i$ if $x > t_i$.

As previously discussed, in our construction all neurons point right except for the third neuron $h_3$, which points left. This is to ensure that the region before $t_1$ has one activated neuron instead of zero, which would happen if all neurons pointed left. However, although ensuring every region has images of dimension one is necessary to reach the bound, not every set of directions yields valid weights. These directions are chosen so that they admit valid weights.

The directions of the neurons tells us which neurons are activated in each region. From left to right, we start with $h_3$ activated, then we activate $h_1$ and $h_2$ as we move forward, we deactivate $h_3$, and finally we activate $h_4, \ldots, h_n$ in sequence. This yields the following system of equations, where $t_{n+1}$ is defined as 1 for simplicity:
\begin{align}
w_3 x + (b_3 + d) &= \frac{1}{t_1}x\tag{$R_1$}\\
\left(w_1 + w_3\right) x + \left(b_1 + b_3 + d\right) &= -\frac{1}{t_2 - t_1}x + \frac{t_2}{t_2 - t_1}\tag{$R_2$}\\
\left(w_1 + w_2 + w_3\right) x + \left(b_1 + b_2 + b_3 + d\right) &= \frac{1}{t_3 - t_2}x - \frac{t_2}{t_3 - t_2}\tag{$R_3$}\\
\left(w_1 + w_2\right) x + \left(b_1 + b_2 + d\right) &= -\frac{1}{t_4 - t_3}x + \frac{t_4}{t_4 - t_3}\tag{$R_4$}\\
\left(w_1 + w_2 + \sum_{j=4}^{i-1} w_j\right) x + \left(b_1 + b_2 + \sum_{j=4}^{i-1} b_j + d\right) &= \begin{cases}\frac{1}{t_i - t_{i-1}}x - \frac{t_{i-1}}{t_i - t_{i-1}} \quad \text{if $i$ is odd}\\
-\frac{1}{t_i - t_{i-1}}x + \frac{t_{i}}{t_i - t_{i-1}} \quad \text{if $i$ is even}
\end{cases}\tag{$R_i$}\\&\hspace{9em}\text{for all $i = 5, \ldots, n + 1$}\notag
\end{align}

It is left to show that there exists a solution to this system of linear equations such that $0 < t_1 < \ldots < t_n < 1$.

First, note that all of the biases $b_1, \ldots, b_n, d$ can be written in terms of $t_1, \ldots, t_n$. Note that if we subtract $(R_4)$ from $(R_3)$, we can express $b_3$ in terms of the $t_i$ variables. The remaining equations become triangular, and therefore given any values for $t_i$'s we can back-substitute the remaining bias variables.

The same subtraction yields $w_3$ in terms of $t_i$'s. However, both $(R_1)$ and $(R_3) - (R_4)$ define $w_3$ in terms of the $t_i$ variables, so they must be the same:
\begin{align*}
\frac{1}{t_1} = \frac{1}{t_3 - t_2} + \frac{1}{t_4 - t_3}.
\end{align*}
If we find values for $t_i$'s satisfying this equation and $0 < t_1 < \ldots < t_n < 1$, all other weights can be obtained by back-substitution since eliminating $w_3$ yields a triangular set of equations.

In particular, the following values are valid: $t_1 = \frac{1}{2n + 1}$ and $t_i = \frac{2i - 1}{2n + 1}$ for all $i = 2, \ldots, n$. The remaining weights and biases can be obtained as described above, which completes the desired construction.

As an example, a construction with four units is depicted in Figure~\ref{fig:1d_construction}. Its breakpoints are $t_1 = \frac{1}{9}$, $t_2 = \frac{3}{9}$, $t_3 = \frac{5}{9}$, and $t_4 = \frac{7}{9}$. Its ReLUs are $h_1(x) = \max\{0, -\frac{27}{2} x + \frac{3}{2}\}$, $h_2(x) = \max\{0, 9x-3\}$, $h_3(x) = \max\{0, 9x-5\}$, and $h_4(x) = \max\{0, 9x\}$. Finally, $\tilde{h}(x) = [-1, 1, -1, 1] [h_1(x), h_2(x), h_3(x), h_4(x)]^\top + 5$.

\end{proof}

\section{Proof of Theorem~\ref{thm:lower_bound}}\label{sec:proof_lower_bound}

\begin{refthm}{thm:lower_bound}
The maximal number of linear regions induced by a rectifier network with $n_0$ input units and $L$ hidden layers with $n_l \geq 3n_0$ for all $l$ is lower bounded by
\begin{align*}
\left(\prod_{l=1}^{L-1}\left(\left\lfloor \frac{n_l}{n_0} \right\rfloor + 1\right)^{n_0}\right) \sum_{j=0}^{n_0} \binom{n_L}{j}.
\end{align*}
\end{refthm}
\begin{proof}
We follow the proof of Theorem~5 from \citet{Montufar2014} except that we use a different 1-dimensional construction. The main idea of the proof is to organize the network into $n_0$ independent networks with input dimension 1 each and apply the 1-dimensional construction to each individual network. In particular, for each layer $l$ we assign $\lfloor n_l / n_0 \rfloor$ ReLUs to each network, ignoring any remainder units. In \citet{Montufar2014}, each of these networks have at least $\prod_{l=1}^L \lfloor n_l / n_0 \rfloor$ regions. We instead use Theorem~\ref{thm:dimension_one} to attain $\prod_{l=1}^L (\lfloor n_l / n_0 \rfloor + 1)$ regions in each network. We assume that $n_l \geq 3n_0$ since each of the networks from Theorem~\ref{thm:dimension_one} requires at least 3 units per layer.

Since the networks are independent from each other, the number of activation patterns of the compound network is the product of the number of activation patterns of each of the $n_0$ networks. Hence, the same holds for the number of regions. Therefore, the number of regions of this network is at least $(\prod_{l=1}^L (\lfloor n_l / n_0 \rfloor + 1))^{n_0}$.

In addition, we can replace the last layer by a function representing an arrangement of $n_L$ hyperplanes in general position that partitions $(0, 1)^{n_0}$ into $\sum_{j = 0}^{n_0} \binom{n_L}{j}$ regions. This yields the lower bound of $\prod_{l=1}^{L-1} (\lfloor n_l / n_0 \rfloor + 1)^{n_0} \sum_{j = 0}^{n_0} \binom{n_L}{j}$.

\end{proof}

\section{Proof of Theorem~\ref{thm:lower_bound_arora}}\label{sec:proof_lower_bound_arora}

\begin{refthm}{thm:lower_bound_arora}
For any values of $m \geq 1$ and $w \geq 2$, there exists a rectifier network with $n_0$ input units and $L$ hidden layers of size $2m + w(L-1)$ that has $2\sum_{j=0}^{n_0-1}\binom{m-1}{j}(w+1)^{L-1}$ linear regions.
\end{refthm}
\begin{proof}
Theorem 6.1 and Lemma 6.2 in \citet{Arora2018} imply that for any $m \geq 1$, we can construct a layer representing a function from $\mathbb{R}^n$ to $\mathbb{R}$ with $2m$ ReLUs that has $2\sum_{j=0}^{n_0-1}\binom{m-1}{j}$ regions. Consider the network where this layer is the first one and the remaining layers are the one-dimensional layers from Theorem~\ref{thm:dimension_one}, each of size $w$. Then this network has size $2m + w(L-1)$ and $2\sum_{j=0}^{n_0-1}\binom{m-1}{j}(w+1)^{L-1}$ regions.
\end{proof}

\section{Proof of Theorem~\ref{thm:upper_bound_maxout}}\label{sec:proof_upper_bound_maxout}

\begin{refthm}{thm:upper_bound_maxout}
Consider a deep neural network with $L$ layers, $n_l$ rank-$k$ maxout units at each layer $l$, and an input of dimension $n_0$. The maximal number of regions of this neural network is at most
\begin{align*}
\prod_{l=1}^L \sum_{j=0}^{d_l} \binom{\frac{k(k-1)}{2} n_l}{j}
\end{align*}
where $d_l = \min\{n_0, n_1, \ldots, n_l\}$.

Asymptotically, if $n_l = n$ for all $l = 1, \ldots, L$, $n \geq n_0$, and $n_0 = O(1)$, then the maximal number of regions is at most $O((k^2n)^{Ln_0})$.
\end{refthm}
\begin{proof}
We denote by $W_j^l$ the $n_l \times n_{l-1}$ matrix where the rows are given by the $j$-th weight vectors of each rank-$k$ maxout unit at layer $l$, for $j = 1, \ldots, k$. Similarly, $b_j^l$ is the vector composed of the $j$-th biases at layer $l$.

In the case of maxout, an activation pattern $\mathcal{S} = (S^1, \ldots, S^l)$ is such that $S^l$ is a vector that maps from layer-$l$ neurons to $\{1, \ldots, k\}$. We say that the activation of a neuron is $j$ if $w_j x + b_j$ attains the maximum among all of its functions; that is, $w_j x + b_j \geq w_{j'} x + b_{j'}$ for all $j' = 1, \ldots, j$. In the case of ties, we assume the function with lowest index is considered as its activation.

Similarly to the ReLU case, denote by $\phi_{S^l} : \mathbb{R}^{n_l \times n_{l-1} \times k} \to \mathbb{R}^{n_l \times n_{l-1}}$ the operator that selects the rows of $W_1^l, \ldots, W_k^l$ that correspond to the activations in $S^l$. More precisely, $\phi_{S^l}(W_1^l, \ldots, W_k^l)$ is a matrix $W$ such that its $i$-th row is the $i$-th row of $W_j^l$, where $j$ is the neuron $i$'s activation in $S^l$. This essentially applies the maxout effect on the weight matrices given an activation pattern.

\citet{Montufar2014} provides an upper bound of $\sum_{j=0}^{n_0} \binom{k^2 n}{j}$ for the number of regions for a single rank-$k$ maxout layer with $n$ neurons. The reasoning is as follows. For a single maxout unit, there is one region per linear function. The boundaries between the regions are composed by segments that are each contained in a hyperplane. Each segment is part of the boundary of at least two regions and conversely each pair of regions corresponds to at most one segment. Extending these segments into hyperplanes cannot decrease the number of regions. Therefore, if we now consider $n$ maxout units in a single layer, we can have at most the number of regions of an arrangement of $k^2 n$ hyperplanes. In the results below we replace $k^2$ by $\binom{k}{2}$, as only pairs of distinct functions need to be considered.

We need to define more precisely these $\binom{k}{2}n$ hyperplanes in order to apply a strategy similar to the one from the Section~\ref{sec:upper_bound_rectifier}. In a single layer setting, they are given by $w_j x + b_j = w_{j'} + b_{j'}$ for each distinct pair $j, j'$ within a neuron. In order to extend this to multiple layers, consider a $\binom{k}{2}n_l \times n_{l-1}$ matrix $\hat{W}_l$ where its rows are given by $w_j - w_{j'}$ for every distinct pair $j, j'$ within a neuron $i$ and for every neuron $i = 1, \ldots, n_l$. Given a region $\mathcal{S}$, we can now write the weight matrix corresponding to the hyperplanes described above: $\hat{W}^l_{\mathcal{S}} := \hat{W}^l \; \phi_{S^{l-1}}(W_1^{l-1}, \ldots, W_k^{l-1}) \cdots \phi_{S^1}(W_1^1, \ldots, W_k^1)$. In other words, the hyperplanes that extend the boundary segments within region $\mathcal{S}$ are given by the rows of $\hat{W}^l_{\mathcal{S}} x + b = 0$ for some bias $b$.

A main difference between the maxout case and the ReLU case is that the maxout operator $\phi$ does not guarantee reductions in rank, unlike the ReLU operator $\sigma$. We show the analogous of Lemma~\ref{lem:regions_rank_network} for the maxout case. However, we fully relax the rank.

\begin{lem}\label{lem:regions_rank_network_maxout}
The number of regions induced by the $n_l$ neurons at layer $l$ within a certain region $\mathcal{S}$ is at most $\sum_{j=0}^{d_l} \binom{\frac{k(k-1)}{2} n_l}{j}
$, where $d_l = \min\{n_0, n_1, \ldots, n_l\}$.
\end{lem}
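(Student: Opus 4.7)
The plan is to mirror the proof of Lemma~\ref{lem:regions_rank_network} in the maxout setting, exploiting the matrix $\hat{W}^l_{\mathcal{S}}$ constructed in the paragraphs preceding the lemma statement. Since the partition of $\mathcal{S}$ induced by the layer-$l$ maxout units is cut out by exactly the $\binom{k}{2} n_l$ tie hyperplanes whose coefficients are the rows of $\hat{W}^l_{\mathcal{S}} x + b = 0$, applying Lemma~\ref{lem:regions_rank} directly to this arrangement gives an upper bound of $\sum_{j=0}^{\operatorname{rank}(\hat{W}^l_{\mathcal{S}})} \binom{\binom{k}{2} n_l}{j}$ on the number of subregions of $\mathcal{S}$.

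The second step is to bound this rank. Writing $\hat{W}^l_{\mathcal{S}} = \hat{W}^l \, \phi_{S^{l-1}}(\cdots) \cdots \phi_{S^1}(\cdots)$ as a product whose factors have sizes $\binom{k}{2} n_l \times n_{l-1}$, $n_{l-1} \times n_{l-2}$, $\ldots$, $n_1 \times n_0$, and using that the rank of a product is at most the minimum of the factor ranks, I get $\operatorname{rank}(\hat{W}^l_{\mathcal{S}}) \leq \min\{n_0, n_1, \ldots, n_{l-1}\}$. This is the step where, as the excerpt signals with ``we fully relax the rank,'' we forgo a potentially tighter per-region bound and instead use a uniform one across all regions $\mathcal{S}$ at layer $l-1$.

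To fold $n_l$ into the final minimum and obtain $d_l$, I would supplement the arrangement bound with a simple combinatorial one. Each of the $n_l$ maxout units selects one of $k$ winning pieces, so there are at most $k^{n_l}$ distinct subregions of $\mathcal{S}$. A short binomial comparison, using $\binom{\binom{k}{2} n_l}{n_l} \geq \binom{k}{2}^{n_l} \geq k^{n_l}$ for $k \geq 3$ and direct equality for $k = 2$, shows $k^{n_l} \leq \sum_{j=0}^{n_l} \binom{\binom{k}{2} n_l}{j}$. Taking the smaller of the arrangement bound and this combinatorial bound, and using that $\min\bigl(\sum_{j=0}^{A},\sum_{j=0}^{B}\bigr) = \sum_{j=0}^{\min(A,B)}$ for nonnegative terms, the number of subregions is at most $\sum_{j=0}^{d_l} \binom{\binom{k}{2} n_l}{j}$ with $d_l = \min\{n_0, n_1, \ldots, n_l\}$.

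The main obstacle I foresee is precisely this reconciliation of the rank-product arrangement bound (which naturally yields only $\min\{n_0,\ldots,n_{l-1}\}$) with the combinatorial activation-pattern bound that introduces $n_l$; each ingredient individually is routine given the scaffolding already in the excerpt, but combining them cleanly into a single expression indexed by $d_l$ is the careful step.
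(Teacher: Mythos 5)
Your proof is correct, and it follows the paper's skeleton (bound the subregions of $\mathcal{S}$ by the arrangement of the $\binom{k}{2}n_l$ tie hyperplanes $\hat{W}^l_{\mathcal{S}}x+b=0$, apply Lemma~\ref{lem:regions_rank}, and bound $\operatorname{rank}(\hat{W}^l_{\mathcal{S}})$ by the minimum of the factor ranks), but it diverges in one substantive way: how the term $n_l$ enters $d_l$. The paper simply writes $\operatorname{rank}(\hat{W}^l_{\mathcal{S}}) \leq \min\{\operatorname{rank}(\hat{W}^l),\ldots\} \leq \min\{n_0,\ldots,n_l\}$, which implicitly uses $\operatorname{rank}(\hat{W}^l)\leq n_l$; since $\hat{W}^l$ is a $\binom{k}{2}n_l\times n_{l-1}$ matrix whose rows within a single neuron span a space of dimension up to $k-1$, its rank can be as large as $\min\{(k-1)n_l,n_{l-1}\}$, so for $k\geq 3$ that step is not justified as written (e.g., one rank-$3$ maxout unit in $\mathbb{R}^2$ gives $\operatorname{rank}(\hat{W}^1)=2>n_1=1$). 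You instead extract only $\min\{n_0,\ldots,n_{l-1}\}$ from the product-rank argument — which is all it honestly yields — and recover the $n_l$ term by a separate combinatorial count: at most $k^{n_l}$ activation patterns $S^l$ within $\mathcal{S}$, together with $k^{n_l}\leq\binom{k}{2}^{n_l}\leq\binom{\binom{k}{2}n_l}{n_l}\leq\sum_{j=0}^{n_l}\binom{\binom{k}{2}n_l}{j}$ for $k\geq 3$ (and equality for $k=2$), then take the minimum of the two bounds using monotonicity of partial sums. All of these steps check out, and the final combination $N\leq\min\bigl(\sum_{j=0}^{A},\sum_{j=0}^{n_l}\bigr)=\sum_{j=0}^{d_l}$ is valid. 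In short, your route is slightly longer but patches a real gap in the published argument; the paper's version is terser but rests on a rank claim for $\hat{W}^l$ that does not hold for $k\geq 3$.
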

\begin{proof}
For a fixed region $\mathcal{S}$, an upper bound is given by the number of regions of the hyperplane arrangement corresponding to $\hat{W}^l_{\mathcal{S}} x + b = 0$ for some bias $b$. The rank of $\hat{W}^l_{\mathcal{S}}$ is upper bounded by
\begin{align*}
\operatorname{rank}(\hat{W}^l_{\mathcal{S}}) &= \operatorname{rank}(\hat{W}^l \; \phi_{S^{l-1}}(W_1^{l-1}, \ldots, W_k^{l-1}) \cdots \phi_{S^1}(W_1^1, \ldots, W_k^1))\\
& \leq \min\{\operatorname{rank}(\hat{W}^l), \operatorname{rank}(\phi_{S^{l-1}}(W_1^{l-1}, \ldots, W_k^{l-1})), \ldots, \operatorname{rank}(\phi_{S^1}(W_1^1, \ldots, W_k^1))\}\\
& \leq \min\{n_0, n_1, \ldots, n_l\}.
\end{align*}
Applying Lemma~\ref{lem:regions_rank} yields the result.
\end{proof}

Since we can consider the partitioning of regions independently from each other, Lemma~\ref{lem:regions_rank_network_maxout} implies that the maximal number of regions of a rank-$k$ maxout network is at most $\prod_{l=1}^L \sum_{j=0}^{d_l} \binom{\frac{k(k-1)}{2} n_l}{j}$ where $d_l = \min\{n_0, n_1, \ldots, n_l\}$.

\end{proof}

\section{Implementation of the mixed-integer formulation}\label{sec:mir_impl}

In practice, the value of constant $M$ should be chosen to be as small as possible, 
which also implies choosing different values on different places 
to make the formulation tighter and more stable numerically~\citep{BigM}. 
For the constraints set~\eqref{cons:hyp}--\eqref{cons:zb}, 
it suffices to choose $M$ to be as large as either $h_i^l$ or $\bar{h}_i^l$ can be 
given the bounds on the input. 
Hence, 
we can respectively replace $M$ with $H^l_i$ and $\bar{H}^l_i$ in the constraints involving those variables. 
If we are given lower and upper bounds for $X$, which we can use for $H^0$ and $\bar{H}^0$, 
then we can define subsequent bounds as follows: 
\[
H^l_i = \max\left\{0, \sum_{j} \max\left\{0, w_{i j}^l H^{l-1}_j\right\} + b_i^l \right\}
\]  
\[
\overline{H}^l_i = \max\left\{0, \sum_{j} \max\left\{0, - w_{i j}^l H^{l-1}_j\right\} - b_i^l \right\}
\]
For the constraint involving $f$ in formulation~$\mathcal{P}$, 
we should choose a slightly larger value than $H^l_i$ for correctness because some neurons may never be active within the input bounds.

\section{Mixed-integer representability of rectifier networks}\label{sec:mir}

\begin{cor}
If the input $X$ is a polytope, 
then the $(x,y)$ mapping of a rectifier DNN is mixed-integer representable.
\end{cor}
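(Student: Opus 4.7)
The plan is to build the mixed-integer representation by composing the neuron-level formulations from Theorem~\ref{thm:mir} across all layers, and to verify that the big-$M$ constants it requires can be chosen finite when $X$ is a polytope. Recall that a set is mixed-integer representable precisely when it is the projection onto its original variables of a set cut out by finitely many linear inequalities involving additional continuous and binary variables. Our goal is therefore to exhibit such a finite description of the graph $\{(x,y) : x \in X,\ y = F(x)\}$.

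First I would introduce, for every hidden neuron $i$ in every layer $l \in \{1,\ldots,L\}$, the variables $h^l_i, \overline{h}^l_i, z^l_i$ together with the constraints~\eqref{cons:hyp}--\eqref{cons:zb}, setting $h^0 := x$. Then I would add the output equation $y = W^{L+1} h^L$, which is linear, and the constraints $x \in X$, which are also linear since $X$ is a polytope. The total number of variables and constraints is finite, and exactly $\sum_{l=1}^L n_l$ of the variables are binary, so the description has the required mixed-integer form.

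The main obstacle is providing a finite constant $M$ that satisfies the hypothesis of Theorem~\ref{thm:mir}, namely $|W^l_i h^{l-1} + b^l_i| \leq M$ for every feasible value of $h^{l-1}$. Since a polytope is by definition bounded, $X$ lies in a compact set, so the image $\{W^1 x + b^1 : x \in X\}$ is bounded and hence so are the ranges of $h^1$ and $\overline{h}^1$ after the ReLU. Proceeding inductively on $l$, each $h^{l-1}$ ranges over a bounded set, so $W^l_i h^{l-1} + b^l_i$ is bounded in absolute value; one can, for instance, use the layerwise bounds $H^l_i$ and $\overline{H}^l_i$ from Appendix~\ref{sec:mir_impl} and take any $M$ at least as large as the largest such bound across the network. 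With this finite $M$ in hand, Theorem~\ref{thm:mir} guarantees that for each fixed $x \in X$ the unique extension of $h^0 := x$ through the constraints produces the correct ReLU outputs layer by layer, and therefore the correct $y = F(x)$.

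Finally I would note that projecting onto $(x,y)$ yields exactly the graph of $F$ restricted to $X$, which establishes mixed-integer representability. The proof is essentially a straightforward composition argument, so no new ideas beyond Theorem~\ref{thm:mir} and the boundedness of a polytope are required; the only subtlety is the inductive boundedness argument needed to legitimize the big-$M$ constants.
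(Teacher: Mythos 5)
Your proposal is correct and follows essentially the same route as the paper: the paper's proof is a one-line appeal to the formulation of Theorem~\ref{thm:mir} together with the observation that boundedness of the input guarantees a sufficiently large finite $M$ exists. You merely spell out the layerwise induction establishing that finiteness (as in Appendix~\ref{sec:mir_impl}), which the paper leaves implicit.
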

\begin{proof}
Immediate from the existence of a mixed-integer formulation mapping $x$ to $y$, 
which is correct as long as the input is bounded and thus a sufficiently large $M$ exists.
\end{proof}

Formulation $\mathcal{P}$ and the result above have important consequences. First, they allow us to tap into the literature of mixed-integer representability~\citep{MIR} and disjunctive programming~\citep{DP} to understand what can be modeled on rectifier networks with a finite number of neurons and layers. To the best of our knowledge, that has not been discussed before. Second, they imply that we can use mixed-integer optimization solvers to analyze the $(\mathbf{x},\mathbf{y})$ mapping of a trained neural network. For example, 
\citet{Cheng2017} use another mixed-integer formulation to generate adversarial examples. 

\section{Counting linear regions of ReLUs with unrestricted inputs} \label{sec:unrestricted_inputs}

More generally, 
we can represent linear regions as a disjunctive program~\citep{DP}, 
which consist of a union of polyhedra. Disjunctive programs are used in the integer programming literature to generate cutting planes by lift-and-project~\citep{CGLP}. 
In what follows, we assume that a neuron can be either active or inactive when the output lies on the activation hyperplane. 

For each active neuron, 
we can use the following constraints to map input to output:
\begin{align}
w^l_i h^{l-1} + b^l_i = h^l_i \label{cons:act_hyp} \\ 
h^l_i \geq 0 \label{cons:act_hp} 
\end{align}
For each inactive neuron, we use the following constraint:
\begin{align}
w^l_i h^{l-1} + b^l_i \leq 0 \label{cons:ina_hyp} \\
h^l_i = 0 \label{cons:ina_hp} 
\end{align}

\begin{thm}
The set of linear regions of a rectifier network is a union of polyhedra.
\end{thm}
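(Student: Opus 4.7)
The plan is to show that each linear region, as defined by Definition~\ref{def:linear_regions2}, is itself a polyhedron, and then invoke the fact that there are only finitely many activation patterns to conclude that the set of linear regions is a finite union of polyhedra.

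First, I would fix an activation pattern $\mathcal{S} = (S^1, \ldots, S^L)$ and consider the lifted system in variables $(x, \mathbf{h}^1, \ldots, \mathbf{h}^L)$ obtained by writing, for every neuron $i$ at every layer $l$, constraints \eqref{cons:act_hyp}--\eqref{cons:act_hp} if $i \in S^l$ and \eqref{cons:ina_hyp}--\eqref{cons:ina_hp} otherwise. This is a finite system of linear equalities and inequalities, so its solution set $P_{\mathcal{S}}$ in the lifted space is a polyhedron. By construction, an input $x$ belongs to the linear region indexed by $\mathcal{S}$ if and only if there exist hidden activations $\mathbf{h}^1, \ldots, \mathbf{h}^L$ with $(x, \mathbf{h}^1, \ldots, \mathbf{h}^L) \in P_{\mathcal{S}}$, since these constraints exactly encode the recursive ReLU evaluation under the prescribed on/off status of each unit.

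Next, I would project $P_{\mathcal{S}}$ onto the input coordinates. Projection of a polyhedron onto a coordinate subspace yields a polyhedron (for instance by Fourier--Motzkin elimination), so the set $Q_{\mathcal{S}}$ of inputs corresponding to $\mathcal{S}$ is a polyhedron. Equivalently, one can eliminate $\mathbf{h}^l$ explicitly by back-substitution layer by layer: each $h_i^l$ is either set to zero or to $W^l_i \mathbf{h}^{l-1} + b^l_i$, so after $L$ substitutions one obtains a system of linear inequalities in $x$ alone. Since there are only finitely many activation patterns $\mathcal{S}$ (at most $2^{n_1 + \cdots + n_L}$), the collection of linear regions is $\{Q_{\mathcal{S}}\}_{\mathcal{S}}$, a finite family of polyhedra whose union is the portion of input space attaining some activation pattern, i.e.\ all of $\mathbb{R}^{n_0}$.

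The only delicate point will be the treatment of boundary inputs, where some preactivation vanishes and the activation pattern is technically ambiguous. The convention adopted at the start of Section~\ref{sec:unrestricted_inputs} (a neuron on its activation hyperplane may be either active or inactive) neatly handles this: such an input simply lies in $Q_{\mathcal{S}}$ for more than one $\mathcal{S}$, which is consistent with $Q_{\mathcal{S}}$ being closed polyhedra rather than relatively open cells. I do not anticipate a substantive obstacle beyond verifying that the back-substitution produces only linear expressions in $x$, which follows since at each step the chosen branch of the ReLU is itself linear.
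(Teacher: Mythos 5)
Your argument is correct and follows essentially the same route as the paper's proof: for each activation pattern you form the lifted polyhedron defined by constraints \eqref{cons:act_hyp}--\eqref{cons:act_hp} and \eqref{cons:ina_hyp}--\eqref{cons:ina_hp}, project out the hidden variables via Fourier--Motzkin elimination, and take the finite union over activation patterns. The extra remarks on back-substitution and on boundary inputs are consistent elaborations, not a different method.
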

\begin{proof}
First, the activation set $S^l$ for each level $l$ defines the following mapping:
\begin{align}\label{dp}
\bigcup_{S^l \subseteq \{1, \ldots, n_l \}, l \in \{1, \ldots, L+1\}} \big\{ (h^0, h^1, \ldots, h^{L+1}) ~ | ~ \eqref{cons:act_hyp}-\eqref{cons:act_hp} \text{ if } i \in S^l; \eqref{cons:ina_hyp}-\eqref{cons:ina_hp} \text{ otherwise } \big\}
\end{align}
Consequently, we can project the variables sets $h^1, \ldots, h^{L+1}$ out of each of those terms by Fourier-Motzkin elimination~\citep{Fourier}, 
thereby yielding a polyhedron for each combination of active sets across the layers.
\end{proof}

Note that the result above is similar in essence to Theorem~2 of~\cite{Raghu2017}.

\begin{cor}
If $X$ is unrestricted, then the number of linear regions can be counted using $\mathcal{P}$  
if $M$ is large enough. 
\end{cor}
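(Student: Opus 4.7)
My plan is to reduce the unrestricted-input case to the finiteness established by the preceding theorem: since the set of linear regions is a finite union of polyhedra, and Theorem~\ref{thm:upper_bound_improved} provides an explicit finite bound on their number, it suffices to exhibit a single $M$ for which $\mathcal{P}$ has exactly one feasible integer assignment per non-empty region. I would first observe that every non-empty region contains at least one input point $x^{(r)} \in \mathbb{R}^{n_0}$, and at any such point the pre-activations $|W^l_i h^{l-1}(x^{(r)}) + b^l_i|$ are finite. Taking the maximum of these values over all non-empty regions and all neurons in all layers yields a finite constant $M_0$, because the maximum is over a finite collection.

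For any $M \geq M_0$, I would argue that formulation $\mathcal{P}$ correctly enumerates the regions. On one hand, each non-empty region admits a feasible solution: set $x = x^{(r)}$, compute the induced $h^l_i$ and $\bar{h}^l_i$ from the ReLU action, and set $z^l_i$ according to the activation pattern. By construction, $M \geq M_0$ dominates the absolute values of all pre-activations, so constraints \eqref{cons:hu} and \eqref{cons:su} are satisfied, and the remaining constraints hold by the same reasoning as in the proof of Theorem~\ref{thm:mir}. On the other hand, every feasible integer assignment $z$ corresponds to a non-empty region: constraints \eqref{cons:hyp}--\eqref{cons:zb} force the pre-activation $W^l_i h^{l-1} + b^l_i$ to have the sign dictated by $z^l_i$, so the witnessing $x$ inhabits the corresponding polyhedron in the decomposition~\eqref{dp}. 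Degeneracies at activation boundaries are resolved by the objective maximizing $f$, exactly as in the bounded case of Section~\ref{sec:counting}.

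The main obstacle is justifying the existence of $M_0$ without circular reference to the regions themselves, which at first glance seems to require having already enumerated them. This is not a genuine obstacle: by the preceding theorem, the collection of non-empty regions is finite (its cardinality is at most the bound of Theorem~\ref{thm:upper_bound_improved}), and finiteness of the set of representative points $\{x^{(r)}\}$ implies finiteness of the induced set of pre-activation magnitudes regardless of whether the regions themselves are bounded. Hence $M_0 := \max_{r, l, i} |W^l_i h^{l-1}(x^{(r)}) + b^l_i|$ is a well-defined finite constant, and any $M \geq M_0$ suffices to count the regions via the one-tree procedure described after Theorem~\ref{thm:mir}.
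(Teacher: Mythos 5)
Your proposal is correct and follows essentially the same route as the paper: the paper's proof likewise argues that one point per region suffices, that finiteness of the set of regions guarantees a sufficiently large $M$ exists, and that infeasible or degenerate activation patterns yield no (counted) solution of $\mathcal{P}$. Your explicit construction of $M_0$ as a maximum of pre-activation magnitudes over one representative point per region is just a more detailed rendering of the paper's one-line finiteness argument.
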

\begin{proof}
To count regions, we only need one point $x$ from each linear region. 
Since the number of linear regions is finite, 
then it suffices if $M$ is large enough to correctly map a single point in each region. 
Conversely, each infeasible linear region either corresponds to empty sets of~\eqref{dp} 
or else to a polyhedron $P$ such that $\{ (h^1, \ldots, h^{L+1}) \in P ~ | ~ h_i^l > 0 ~ \forall l \in \{1, \ldots, L+1\}, i \in S^l \big\}$ is empty, 
and neither case would yield a solution for the $z$-projection of $\mathcal{P}$. 
\end{proof}

\section{Mixed-integer representability of maxout units} \label{sec:mir_maxouts}

In what follows, we assume that we are given a neuron $i$ in level $l$ with output $h^l_i$.  
For that neuron, we denote the vector of weights as $w^{l i}_1, \ldots, w^{l i}_k$. 
Thus, the neuron output corresponds to 
\[
h_i^l := \max \left\{ w_1^{l i} h^{l-1} + b_1, \ldots, w_k^{l i} h^{l-1} + b_k \right\}
\]

Hence, we can connect inputs to outputs for that given neuron as follows:

\begin{align}
w^{l i}_{j} h^{l-1}_j + b^{l i}_j = g^{l i}_j, & \qquad j = 1, \ldots, k  \label{max:hyp} \\ 
h^l_i \geq g^{l i}_j, & \qquad j = 1, \ldots, k \label{max:hp} \\
h^l_i \leq g^{l i}_j + M (1- z^{l i}_j) & \qquad j = 1, \ldots, k \label{max:hu} \\
z^{l i}_j \in \{0, 1\}, & \qquad j = 1, \ldots, k \label{max:zb} \\
\sum_{j=1}^k z^{l i}_j = 1 \label{max:sumz} 
\end{align}

The formulation above generalizes that for ReLUs with some small modifications. 
First, we are computing the output of each term with constraint~\eqref{max:hyp}. 
The output of the neuron is lower bounded by that of each term with constraint~\eqref{max:hp}. 
Finally, we have a binary variable $z^{l i}_m$ per term of each neuron, 
which denotes which term matches the output. 
Constraint~\eqref{max:sumz} enforces that only one variable is at one per neuron, 
whereas constraint~\eqref{max:hu} equates the output of the neuron with the active term. 
Each constant $M$ should be chosen in a way that the other terms can vary freely, 
hence effectively disabling the constraint when the corresponding binary variable is at zero.

\section{Runtimes for counting the linear regions} \label{sec:runtimes}

Table~\ref{tab:runtimes} reports the median runtimes to count different configurations of networks on the experiment.

\begin{table}[!h]
\[
\begin{array}{ccc}
&\textbf{Network configuration} & \textbf{Runtime (s)} \\ 
\hline
& 1;21;10 & 5.3 \times 10^{-1} \\
& 2;20;10 & 8.0 \times 10^{-1} \\
& 3;19;10 & 5.0 \times 10^{0} \\
& 4;18;10 & 4.0 \times 10^1 \\
& 5;17;10 & 2.1 \times 10^2 \\
& 6;16;10 & 5.7 \times 10^2 \\
& 7;15;10 & 1.8 \times 10^3 \\
& 8;14;10 & 4.5 \times 10^3 \\
& 9;13;10 & 9.2 \times 10^3 \\
& 10;12;10 & 1.7 \times 10^4 \\
& 11;11;10 & 3.3 \times 10^4 \\
& 12;10;10 & 5.3 \times 10^4 \\
& 13;9;10 & 7.3 \times 10^4 \\
& 14;8;10 & 1.1 \times 10^5 \\
& 15;7;10 & 9.3 \times 10^4 \\
& 16;6;10 & 1.3 \times 10^5 \\
& 17;5;10 & 1.6 \times 10^5 \\
& 18;4;10 & 1.8 \times 10^5 \\
& 19;3;10 & 2.4 \times 10^5 \\
& 20;2;10 & 9.9 \times 10^4 \\
& 21;1;10 & 3.7 \times 10^4 \\
\end{array}
\]
\caption{\emph{Median runtimes for counting the trained networks for each configuration used in the experiment.}}
\label{tab:runtimes}
\end{table}

\end{document}